\documentclass[twoside]{article}

\usepackage{arxiv}

\usepackage{IEEEtrantools}

\usepackage{hyperref}       % hyperlinks

\usepackage{booktabs}       % professional-quality tables
\usepackage{amsfonts}       % blackboard math symbols
\usepackage{nicefrac}       % compact symbols for 1/2, etc.
\usepackage{color}
\usepackage{microtype}      % microtypography
\usepackage{amsmath, amsthm, amssymb, amsfonts, mathtools, graphicx, enumerate}
\usepackage{caption, subcaption}
\usepackage{enumitem}
\usepackage{algorithm}
\usepackage{algorithmic}
% If you use natbib package, activate the following three lines:
\usepackage[round]{natbib}

% If you use BibTeX in apalike style, activate the following line:
\bibliographystyle{apalike}

\DeclareMathOperator*{\argmax}{arg\,max}

\newcommand{\R}{\mathbb{R}}

\newcommand{\C}[1]{\mathcal{#1}}

\newcommand{\E}{\mathbb{E}}
\newcommand{\Xcal}{\mathcal{X}}
\newcommand{\Scal}{\mathcal{S}}
\newcommand{\Acal}{\mathcal{A}}

\newcommand{\indi}{\mathbf{1}}
\newtheorem{theorem}{Theorem}[section]
\newtheorem{corollary}[theorem]{Corollary}

\newtheorem{assumption}[theorem]{Assumption}
\newtheorem{definition}[theorem]{Definition}
\newtheorem{example}[theorem]{Example}
\newtheorem{lemma}[theorem]{Lemma}
\newtheorem{proposition}[theorem]{Proposition}

\setlist{nolistsep}

\newcount\Comments  % 0 suppresses notes to selves in text
\Comments=0 % TODO: change to 0 for final version
\definecolor{darkred}{rgb}{0.7,0,0}
\definecolor{teal}{rgb}{0.3,0.8,0.8}
\definecolor{forestgreen}{rgb}{0.13, 0.55, 0.13}
\newcommand{\kibitz}[2]{\ifnum\Comments=1{\textcolor{#1}{\textsf{\footnotesize #2}}}\fi}

\begin{document}

\twocolumn[

\aistatstitle{Sample Complexity of Reinforcement Learning using Linearly Combined Model Ensembles}

\aistatsauthor{Aditya Modi$^1$\qquad Nan Jiang$^2$\qquad Ambuj Tewari$^1$\qquad Satinder Singh$^1$}

\aistatsaddress{$^1$University of Michigan Ann Arbor\qquad $^2$University of Illinois at Urbana-Champaign} ]

\begin{abstract}
Reinforcement learning (RL) methods have %showcased their strength in 
been shown to be capable of
learning intelligent behavior in rich domains. However, this has largely been done in simulated domains without adequate focus on the process of building the simulator. In this paper, we consider a setting where we have access to an ensemble of pre-trained and possibly inaccurate simulators (models). We approximate the real environment using a state-dependent linear combination of the ensemble, where the coefficients are determined by the given state features and some unknown parameters. Our proposed algorithm provably learns a near-optimal policy with a sample complexity polynomial in the number of unknown parameters, and incurs no dependence on the size of the state (or action) space. As an extension, we also consider the more challenging problem of model selection, where the state features are unknown and can be chosen from a large candidate set. We provide exponential lower bounds that illustrate the fundamental hardness of this problem, and develop a provably efficient algorithm under additional natural assumptions.
\end{abstract}

\section{INTRODUCTION}
Reinforcement learning methods with deep neural networks as function approximators have recently demonstrated prominent success in solving complex and observations rich tasks like games \citep{mnih2015human,silver2016mastering}, simulated control problems \citep{todorov2012mujoco,lillicrap2015continuous,mordatch2016combining} and a range of robotics tasks \citep{christiano2016transfer,tobin2017domain}. A common aspect in most of these success stories is the use of simulation. % in learning behavior with the goal of maximizing long term rewards. 
Arguably, given a simulator of the real environment, it is possible to use RL to learn a near-optimal policy from (usually a large amount of) simulation data. If the simulator is highly accurate, the learned policy should also perform well in the real environment.  

Apart from some cases where the true environment and the simulator coincide (e.g., in game playing) or a nearly perfect simulator can be created from the law of physics (e.g., in simple control problems), in general we will need to construct the simulator using data from the real environment, making the overall approach an instance of \emph{model-based RL}. As the algorithms for learning from simulated experience mature (which is what the RL community has mostly focused on), the bottleneck has shifted to the creation of a good simulator. \emph{How can we learn a good model of the world from interaction experiences?}

A popular approach for meeting this challenge is to learn using a wide variety of simulators, which imparts robustness and adaptivity to the learned policies. Recent works have demonstrated the benefits of using such an ensemble of models, which can be used to either transfer policies from simulated to real-world domains, or to simply learn robust policies  \citep{andrychowicz2018learning,tobin2017domain,RajeswaranGRL17}. Borrowing the motivation from these empirical works, we notice that the process of learning a simulator inherently includes various choices like inductive biases, data collection policy, design aspects etc. As such, instead of relying on a sole approximate model for learning in simulation, interpolating between models obtained from different sources can provide better approximation of the real environment. Previous works like \cite{buckman2018sample,lee2018bayesian,kurutach2018modelensemble} have also demonstrated the effectiveness of using an ensemble of models for decreasing modelling error or its effect thereof during learning. 

In this paper, we consider building an approximate model of the real environment from interaction data using a set (or \emph{ensemble}) of possibly inaccurate models, which we will refer to as the \emph{base models}. The simplest way to combine the base models is to take a weighted combination, but such an approach is rather limited. For instance, each base model might be accurate in certain regions of the state space, in which case it is natural to consider a state-dependent mixture. We consider the problem of learning in such a setting, where one has to identify an appropriate combination of the base models through real-world interactions, so that the induced policy performs well in the real environment. The data collected through interaction with the real world can be a precious resource and, therefore, we need the learning procedure to be sample-efficient. Our main result is an algorithm  that enjoys polynomial sample complexity guarantees, where the polynomial has no dependence on the size of the state and action spaces. We also study a more challenging setting where the featurization of states for learning the combination is unknown and has to be discovered from a large set of candidate features.

\paragraph{Outline.} We formally set up the problem and notation in Section~\ref{sec:notation}, and discuss related work in Section~\ref{sec:related}. The main algorithm is introduced in Section~\ref{sec:main-algo}, together with its sample complexity guarantees. We then proceed to the feature selection problem in Section~\ref{sec:model-selec} and conclude in Section~\ref{sec:conc}.

\section{SETTING AND NOTATION}
\label{sec:notation}

We consider episodic Markov decision processes (MDP). An MDP $M$ is specified by a tuple $(\Scal, \Acal, {P}, {R}, H, P_1)$, where $\Scal$ is the state space and $\Acal$ is the action space. ${P}$ denotes the transition kernel describing the system dynamics ${P}: \Scal \times \Acal \rightarrow \Delta(\Scal)$ and ${R}$ is the per-timestep reward function ${R}: \Scal \times \Acal \rightarrow \Delta([0,1])$. The agent interacts with the environment for a fixed number of timesteps, $H$, which determines the horizon of the problem. The initial state distribution is $P_1$. The agent's goal is to find a policy $\pi: \Scal \times [H] \rightarrow \Acal$ which maximizes the value of the policy:
\begin{align*}
    v^\pi_M \coloneqq \E_{s \sim P_1} [V^\pi_{M,1}(s)] 
\end{align*}
where the value function at step $h$ is defined as: 
\begin{align*}
    V^{\pi}_{M,h}(s) = \E\Big[ \sum_{h'=h}^H r_{h'} \,\bigm\vert\, s_h=s, a_{h:H} \sim \pi,  s_{h:H} \sim M\Big]
\end{align*}
Here we use ``$s_{h:H} \sim M$'' to imply that the sequence of states are generated according to the dynamics of $M$. 
A policy is said to be optimal for $M$ if it maximizes the value $v^\pi_{M}$. We denote such a policy as $\pi_M$ and its value as $v_M$. We use $M^*$ to denote the model of the true environment, and use $\pi^*$ and $v^*$ as shorthand for $\pi_{M^*}$ and $v_{M^*}$, respectively.

In our setting, the agent is given access to a set of $K$ base MDPs $\{M_1, \ldots, M_K\}$. They share the same $\C{S}, \C{A}, H, P_1$, and only differ in ${P}$ and ${R}$. In addition, a feature map $\phi: \C{S} \times \C{A} \rightarrow \Delta_{d-1}$ is given which maps state-action pairs to $d$-dimensional real vectors. Given these two objects, we consider the class of all models which can be obtained from the following state-dependent linear combination of the base models:

\begin{definition}[Linear Combination of Base Models] \label{def:main}
For given model ensemble $\{M_1, \ldots, M_K\}$ and the feature map $\phi: \C{S} \times \C{A} \rightarrow \Delta_{d-1}$, we consider models parametrized by $W$ with the following transition and reward functions:
\begin{align*}
P^W(\cdot|s,a) = {} & \sum_{k=1}^K [W \phi(s,a)]_k P^k(\cdot|s,a), \\
R^W(\cdot| s,a) = {} & \sum_{k=1}^K [W \phi(s,a)]_k R^k(\cdot|s,a).
\end{align*}
We will use $M(W)$ to denote such a model for any parameter $W \in \C{W}$ with $\C{W}_0 \equiv \{W \in [0,1]^{K \times d}:\,\sum_{i=1}^K W_{ij} = 1 \text{ for all } j \in [d]\}$.
\end{definition}

For now, let's assume that there exists some $W^*$ such that $M^* = M(W^*)$, i.e., the true environment can be captured by our model class; we will relax this assumption shortly. 
 
To develop intuition, consider a simplified scenario where $d=1$ and $\phi(s,a) \equiv 1$. In this case, the matrix $W$ becomes a $K\times 1$ stochastic vector, and the true environment is approximated by a linear combination of the base models.  

\begin{example}[Global convex combination of models] \label{exm:const}
If the base models are combined using a set of constant weights $w \in \Delta_{K-1}$, then this is a special case of Definition~\ref{def:main} where $d=1$ and each state's feature vector is $\phi(s,a) \equiv 1$.
\end{example}

In the more general case of $d>1$, we allow the combination weights to be a linear transformation of the features, which are $W \phi(s,a)$, and hence obtain more flexibility in choosing different combination weights in different regions of the state-action space. A special case of this more general setting is when $\phi$ corresponds to a partition of the state-action space into multiple groups, and the linear combination coefficients are constant within each group. 
\begin{example}[State space partition] \label{exm:partition}
Let $\Scal \times \Acal = \bigcup_{i\in[d]} \Xcal_i$ be a partition (i.e., $\{\Xcal_i\}$ are disjoint). Let $\phi_i(s,a) = \indi[(s,a) \in\Xcal_i]$ for all $i\in[d]$ where $\indi[\cdot]$ is the indicator function. This $\phi$ satisfies the condition that $\phi(s,a) \in \Delta_{d-1}$, and when combined with a set of base models, forms a special case of Definition~\ref{def:main}.
\end{example}

\textbf{Goal.} We consider the popular PAC learning objective: with probability at least $1-\delta$, the algorithm should output a policy $\pi$ with value $v^\pi_{M^*} \ge v^* - \epsilon$ by collecting $\text{poly}(d,K,H,1/\epsilon,\log(1/\delta))$ episodes of data. Importantly, here the sample complexity is not allowed to depend on $|\Scal|$ or $|\Acal|$. However, the assumption that $M^*$ lies the class of linear models can be limiting and, therefore, we will allow some approximation error in our setting as follows:
\begin{align}
\theta \coloneqq {} & \min_{W \in \C{W}} \sup_{(s,a) \in \Scal \times \Acal}\Big\|P^*(\cdot|s,a) - P^W(\cdot|s,a)\Big\|_1 \\
{} & + \Big\|R^*(\cdot| s,a) - R^W(\cdot| s,a)\Big\|_1
\label{eq:app_error}
\end{align}
We denote the optimal parameter attaining this value by $W^*$. The case of $\theta=0$ represents the \emph{realizable} setting where $M^* = M(W^*)$ for some $W^* \in \C{W}$. When $\theta \ne 0$, we cannot guarantee returning a policy with the value close to $v^*$, and will have to pay an additional penalty term proportional to the approximation error $\theta$, as is standard in RL theory.

\paragraph{Further Notations}
Let $\pi_W$ be a shorthand for $\pi_{M(W)}$, the optimal policy in $M(W)$. When referring to value functions and state-action distributions, we will use the superscript to specify the policy and use the subscript to specify the MDP in which the policy is evaluated. For example, we will use $V^W_{W',h}$ to denote the value of $\pi_W$ (the optimal policy for model $M(W)$) when evaluated in model $M(W')$ starting from timestep $h$. The term $d^{W}_{W',h}$ denotes the state-action distribution induced by policy $\pi_W$ at timestep $h$ in the MDP $M(W')$. Furthermore, we will write $V^W_{M^*,h}$ and $d^{W}_{M^*,h}$ when the evaluation environment is $M^*$. For conciseness, $V_{W,h}$ and $Q_{W,h}$ will denote the optimal (state- and Q-) value functions in model $M(W)$ at step $h$ (e.g., $V_{W,h}(s) \equiv V^{W}_{W,h}(s)$). The expected return of a policy $\pi$ in model $M(W)$ is defined as:
\begin{align}
    v^{\pi}_{W} = \E_{s \sim P_1} [V^\pi_{M(W),1}(s)].
\end{align}
We assume that the total reward $\sum_{h=1}^H r_h$ lies in  $[0,1]$ almost surely in all MDPs of interest and under all policies. Further, whenever used, any value function at step $H+1$ (e.g., $V^{\pi}_{W,H+1}$) evaluates to $0$ for any policy and any model.

\section{RELATED WORK} \label{sec:related}

\paragraph{MDPs with low-rank transition matrices} 
\cite{yang2019reinforcement, pmlr-v97-yang19b, jin2019provably} have recently considered structured MDPs whose transition matrices admit low-rank factorization, and the left matrix in the factorization are known to the learner as state-action features (corresponding to our $\phi$). Their environmental assumption is a special case of ours, where the transition dynamics of each base model $P^k(\cdot | s, a)$ is \emph{independent} of $s$ and $a$, i.e., each base MDP can be fully specified by a single density distribution over $\Scal$. This special case enjoys many nice properties, such as the value function of any policy is also linear in state-action features, and the linear value-function class is closed under the Bellman update operators, which are heavily exploited in their algorithms and analyses. 
In contrast, none of these properties hold under our more general setup, yet we are still able to provide sample efficiency guarantees. That said, we do note that the special case allows these recent works to obtain stronger results: their algorithms are both statistically and computationally efficient (ours is only statistically efficient), and some of these algorithms work without knowing the $K$ base distributions.\footnote{In our setting, not knowing the base models immediately leads to hardness of learning, as it is  equivalent to learning a general MDP without any prior knowledge even when $d=K=1$. This requires $\Omega(|\Scal||\Acal|)$ sample complexity \citep{azar2012sample}, which is vacuous as we are interested in solving problems with arbitrarily large state and action spaces.}

\paragraph{Contextual MDPs}
\cite{abbasi2014online, modi2018markov, modi2019contextual}  consider a setting similar to our Example~\ref{exm:const}, except that the linear combination coefficients are visible to the learner and the base models are unknown. Therefore, despite the similarity in environmental assumptions, the learning objectives and the resulting sample complexities are significantly different (e.g., their guarantees depend on $|\Scal|$ and $|\Acal|$). 

\paragraph{Bellman rank} \cite{jiang2017contextual} have identified a structural parameter called Bellman rank for exploration under general value-function approximation, and devised an algorithm called OLIVE whose sample complexity is polynomial in the Bellman rank. A related notion is the witness rank (the model-based analog of Bellman rank) proposed by \citet{pmlr-v99-sun19a}. While our algorithm and analyses draw inspiration from these works, our setting does not obviously yield low Bellman rank or witness rank.\footnote{In contrast, the low-rank MDPs considered by \cite{yang2019reinforcement, pmlr-v97-yang19b, jin2019provably} do admit low Bellman rank and low witness rank.} We will also provide a more detailed comparison to \citet{pmlr-v99-sun19a}, whose algorithm is most similar to ours among the existing works, in Section~\ref{sec:main-algo}.

\paragraph{Mixtures/ensembles of models} The closest work to our setting is the multiple model-based RL (MMRL) architecture proposed by \cite{doya2002multiple} where they also decompose a given domain as a convex combination of multiple models. However, instead of learning the combination coefficients for a given ensemble, their method trains the model ensemble and simultaneously learns a mixture weight for each \emph{base model} as a function of state features. Their experiments demonstrate that each model specialized for different domains of the state space where the environment dynamics is predictable, thereby, providing a justification for using convex combination of models for simulation. Further, the idea of combining different models is inherently present in Bayesian learning methods where a posterior approximation of the real environment is iteratively refined using interaction data. For instance, \cite{RajeswaranGRL17} introduce the EPOpt algorithm which uses an ensemble of simulated domains to learn robust and generalizable policies. During learning, they adapt the ensemble distribution (convex combination) over source domains using data from the target domain to progressively make it a better approximation. Similarly, \cite{lee2018bayesian} combine a set of parameterized models by adaptively refining the mixture distribution over the latent parameter space. Here, we study a relatively simpler setting where a finite number of such base models are combined and give a frequentist sample complexity analysis for our method.

\section{ALGORITHM AND MAIN RESULTS}
\label{sec:main-algo}

In this section we introduce the main algorithm that learns a near-optimal policy in the aforementioned setup with a $\text{poly}(d,K,H,1/\epsilon,\log(1/\delta))$ sample complexity. We will first give the intuition behind the algorithm, and then present the formal  sample complexity guarantees. Due to space constraints, we present the complete proof in the appendix. For simplicity, we will describe the intuition for the realizable case with $\theta=0$ ($P^* \equiv P^{W^*}$). The pseudocode (Algorithm~\ref{alg:explore}) and the results are, however, stated for the general case of $\theta \ne 0$.

\begin{algorithm}[!ht]
	\caption{PAC Algorithm for Linear Model Ensembles}
	\label{alg:explore}
	\begin{algorithmic}[1]
		\STATE {\bfseries Input: }$\{M_1, \ldots M_K\}, \epsilon, \delta, \phi(\cdot,\cdot), \C{W}_0$
		\FOR{$t \rightarrow 1,2,\ldots$}
		\STATE Compute \emph{optimistic model} $W_t$ and set $\pi_t$ to $\pi_{W_t}$\label{line:opt_plan}
		\begin{align*}
		 W_t \leftarrow {} & \argmax_{W \in \C{W}_{t-1}} V_W   
		\end{align*}
		\STATE Estimate the value of $\pi_t$ using $n_{\text{eval}}$ trajectories:
		\begin{align}
		\hat{v}_t \coloneqq \frac{1}{n_{\text{eval}}} \sum_{h=1}^H r_h^{(i)}
		\label{algeq:misfit-error}
		\end{align}
		\IF{$v_{W_t} - \hat{v}_t \leq 3\epsilon/4 + (3\sqrt{dK}+1)H\theta$} \label{line:termination}
		\STATE Terminate and output $\pi_t$
		\ENDIF
		\STATE Collect $n$ trajectories using $\pi_t: a_h \sim \pi_t(s_h)$ \label{line:data}
		\STATE Estimate the matrix $\widehat{Z}_{t}$ and $\hat{y}_t$ as
		\begin{align}
		\widehat{Z}_{t} \coloneqq {} & \frac{1}{n} \sum_{i=1}^n \sum_{h=1}^H \overline{V}_{t,h}\Big(s^{(i)}_{h},a^{(i)}_{h}\Big)\, \phi\Big(s^{(i)}_{h},a^{(i)}_{h}\Big)^\top \label{algeq:matrix_est} \\
		\hat{y}_t \coloneqq & {} \frac{1}{n}\sum_{i=1}^n \sum_{h=1}^H r^{(i)}_{h+1} + V_{t,h+1}\big(s^{(i)}_{h+1}\big)
		\label{algeq:true_exp}
		\end{align}
		\STATE Update the version space to $\C{W}_t$ as set:
		\begin{align}
		\label{eq:linear_cut}
		\big\{W \in \C{W}_{t-1}: \big| \hat{y}_t  - \langle W, \widehat{Z}_{t} \rangle\big| \leq \tfrac{\epsilon}{12\sqrt{dK}} + H\theta\big\}
		\end{align}
		\ENDFOR
	\end{algorithmic}
\end{algorithm}

At a high level, our algorithm proceeds in iterations $t=1, 2, \ldots$, and gradually refines a \emph{version space} $\C{W}_t$ of plausible parameters. Our algorithm follows an \emph{explore-or-terminate} template and in each iteration, either chooses to explore with a carefully chosen policy or terminates with a near-optimal policy. For exploration in the $t$-th iteration, we collect $n$ trajectories $\{(s_1^{(i)}, a_1^{(i)}, r_1^{(i)}, s_2^{(i)}, \ldots, s_H^{(i)}, a_H^{(i)}, r_H^{(i)})\}_{i\in[n]}$ following some exploration policy $\pi_t$ (Line~\ref{line:data}). A key component of the algorithm is to extract knowledge about $W^*$ from these trajectories. In particular, for every $h$, the bag of samples $\{s_{h+1}^{(i)}\}_{i\in[n]}$ may be viewed as an unbiased draw from the following distribution
\begin{align} \label{eq:true_marginal}
\frac{1}{n} \sum_{i=1}^n P^{W^*}\big(\cdot| s^{(i)}_h, a^{(i)}_h\big) . 
\end{align}
The situation for rewards is similar and will be omitted in the discussion.  
So in principle we could substitute $W^*$ in Eq.\eqref{eq:true_marginal} with any candidate $W$, and if the resulting distribution differs significantly from the real samples $\{s_{h+1}^{(i)}\}_{h\in[H], i\in[n]}$, we can assert that $W\ne W^*$ and eliminate $W$ from the version space. However, the state space can be arbitrarily large in our setting, and comparing state distributions directly can be intractable. Instead, we project the state distribution in Eq.\eqref{eq:true_marginal} using a (non-stationary) discriminator function $\{f_{t,h}\}_{h=1}^H$ (which will be chosen later) and consider the following scalar property
\begin{align} \label{eq:true_projection}
\frac{1}{n} \sum_{i=1}^n \sum_{h=1}^H  \E_{\substack{r \sim R^{W^*}\big(\cdot| s_h^{(i)}, a_h^{(i)} \big) ,\\s' \sim P^{W^*} \big(\cdot| s_h^{(i)}, a_h^{(i)}\big) }} \Big[r + f_{t,h+1}(s')\Big],
\end{align}
which can be effectively estimated by
\begin{align} \label{eq:target}
\frac{1}{n} \sum_{i=1}^n \sum_{h=1}^H \Big(r_h^{(i)} + f_{t,h+1}\big(s_{h+1}^{(i)}\big)\Big).
\end{align}
Since we have projected states onto $\mathbb{R}$, Eq.\eqref{eq:target} is the average of scalar random variables and enjoys state-space-independent concentration. Now, in order to test the validity of a parameter $W$ in a given version space, we compare the estimate in eq.~\ref{eq:target} with the prediction given by $M(W)$, which is:
\begin{align}
\frac{1}{n} \sum_{i=1}^n \sum_{h=1}^H  \E_{\substack{r \sim R^{W}\big(\cdot| s_h^{(i)}, a_h^{(i)}\big),\\s' \sim P^{W}\big(\cdot| s_h^{(i)}, a_h^{(i)}\big) }} \Big[r + f_{t,h+1}(s')\Big].
\label{eq:W_estimate}
\end{align}
As we consider a linear model class, by using linearity of expectations, Eq.\eqref{eq:W_estimate} may also be written as:
\begin{align}
{} & \frac{1}{n} \sum_{i=1}^n \sum_{h=1}^H  \Big[W\phi(s_h^{(i)},a_h^{(i)})\Big]^\top \Big[\,\overline{V}_{t,h}(s_h^{(i)},a_h^{(i)})\Big] \label{eq:update-is-low-rank} \\
= {} & \Big\langle W, \frac{1}{n} \sum_{i=1}^n \sum_{h=1}^H  \overline{V}_{t,h}(s_h^{(i)},a_h^{(i)})\, \phi(s_h^{(i)},a_h^{(i)})^\top \Big\rangle,  \label{eq:linear_measure}
%= {} & \big\langle W - W^*, Z_h \,\big\rangle
\end{align}
where $\langle A,B\rangle$ denotes $\text{Tr}(A^\top B)$ for any two matrices $A$ and $B$. In eq.~\ref{eq:linear_measure}, $\overline{V}_{t,h}$ is a function that maps $(s,a)$ to a $K$ dimensional vector with each entry being 
\begin{align}
\big[\,\overline{V}_{t,h}(s,a)\big]_k \coloneqq {} & \E_{\substack{r \sim R^{k}(\cdot| s, a),\\s' \sim P^{k}(\cdot| s, a)}} \Big[r + f_{t,h+1}(s')\Big].
\end{align}
The intuition behind  Eq.\eqref{eq:update-is-low-rank} is that for each fixed state-action pair $(s_h^{(i)}, a_h^{(i)})$, the expectation in Eq.\eqref{eq:true_projection} can be computed by first taking expectation of $r+ f_{t,h+1}(s')$ over the reward and transition distributions of each of the $K$ base models---which gives $\overline{V}_h$---and then aggregating the results using the combination coefficients. Rewriting Eq.\eqref{eq:update-is-low-rank} as Eq.\eqref{eq:linear_measure}, we see that Eq.\eqref{eq:true_projection} can also be viewed as a linear measure of $W^*$, where the measurement matrix is again $\frac{1}{n} \sum_{i=1}^n \sum_{h=1}^H  \overline{V}_{h}\big(s_h^{(i)},a_h^{(i)}\big)\, \phi\big(s_h^{(i)},a_h^{(i)}\big)^\top$. Therefore, by estimating this measurement matrix and the outcome (Eq.\eqref{eq:target}), we obtain an approximate linear equality constraint over $\C{W}_{t-1}$ and can eliminate all candidate $W$ that violates such constraints. By using a finite sample concentration bound over the inner product, we get a linear inequality constraint to update the version space (Eq.\eqref{eq:linear_cut}).  

The remaining concern is to choose the exploration policy $\pi_t$ and the discriminator function $\{f_{t,h}\}$ to ensure that the linear constraint induced in each iteration is significantly different from the previous ones and induces deep cuts in the version space. We guarantee this by choosing $\pi_t := \pi_{W_t}$ and $f_{t,h} := V_{W_t, h}$\footnote{We use the simplified notation $V_{t,h}$ for $V_{W_t,h}$.}, where $W_t$ is the \emph{optimistic model} as computed in Line~\ref{line:opt_plan}. That is, $W_t$ predicts the highest optimal value among all candidate models in $\C{W}_{t-1}$. 
Following a terminate-or-explore argument, we show that as long as $\pi_t$ is suboptimal, the linear constraint induced by our choice of $\pi_t$ and $\{f_{t,h}\}$ will significantly reduce the volume of the version space, and the iteration complexity can be bounded as poly$(d,K)$ by an ellipsoid argument similar to that of \citet{jiang2017contextual}. Similarly, the sample size needed in each iteration only depends polynomially on $d$ and $K$ and incurs no dependence on $|\Scal|$ or $|\Acal|$, as we have summarized high-dimensional objects such as $f_{t,h}$ (function over states) using low-dimensional quantities such as $\overline{V}_{t,h}$ (vector of length $K$). 

The bound on the number of iterations and the number of samples needed per iteration leads to the following sample complexity result:
\begin{theorem}[PAC bound for Alg.~\ref{alg:explore}]
\label{thm:mainbound}
In Algorithm~\ref{alg:explore}, if $n_{\text{eval}} \coloneqq \tfrac{32H^2}{\epsilon^2} \log \tfrac{4T}{\delta}$ and $n = \tfrac{1800d^2KH^2}{\epsilon^2} \log \tfrac{8dKT}{\delta}$ where $T=dK \log \tfrac{2\sqrt{2K}H}{\epsilon} / \log \tfrac{5}{3}$, with probability at least $1-
\delta$, the algorithm terminates after using at most \begin{align}
    \widetilde{\C{O}}\Big(\frac{d^3K^2H^2}{\epsilon^2} \log \frac{dKH}{\delta}\Big)
\end{align}
trajectories and returns a policy $\pi_T$ with a value $v^T \ge v^* - \epsilon-(3\sqrt{dK}+2)H\theta$.
\end{theorem}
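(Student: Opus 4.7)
The argument breaks into four pieces: (i) a uniform concentration step that makes the empirical estimates $\hat v_t$, $\widehat{Z}_t$, $\hat y_t$ faithful across all iterations; (ii) a ``validity'' step that keeps $W^*$ in the version space; (iii) a correctness-at-termination check; and (iv) an ellipsoid-style iteration-complexity bound on $T$.

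First, since the total reward lies in $[0,1]$ and $V_{t,h+1}\le 1$ for all $h$, each entry of $\hat{y}_t$ and each of the $dK$ entries of $\widehat{Z}_t$ is an average of $n$ bounded scalars. A Hoeffding bound together with a union bound over all $T$ iterations, all $dK$ matrix entries, and the $T$ evaluation steps gives, with probability at least $1-\delta$, that $|\hat v_t - v^{\pi_t}_{M^*}|\le \epsilon/8$ and $\bigl|\hat y_t - \mathbb{E}[\hat y_t]\bigr| + \bigl|\langle W,\widehat{Z}_t - \mathbb{E}[\widehat{Z}_t]\rangle\bigr| \le \epsilon/(24\sqrt{dK})$ for every $W\in\C{W}_0$; the prescribed $n$, $n_{\text{eval}}$, and $T$ are chosen exactly to absorb these constants. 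Next, by the definition of $\theta$ in~\eqref{eq:app_error}, for every trajectory prefix the one-step mean of $r_h + V_{t,h+1}(s_{h+1})$ under $M^*$ and under $M(W^*)$ differ by at most $\theta$; summed over $h$ this yields $|\mathbb{E}[\hat y_t] - \langle W^*, \mathbb{E}[\widehat{Z}_t]\rangle| \le H\theta$. Combined with the concentration step, $W^*\in\C{W}_t$ for every $t$ in the good event, so the version space is never empty.

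For correctness at termination, optimism gives $v_{W_t} = \max_{W\in\C{W}_{t-1}} v_W \ge v_{W^*}$, and a simulation-lemma bound using $\theta$ yields $v_{W^*}\ge v^* - H\theta$. When the termination test fires, $v_{W_t} - \hat v_t \le 3\epsilon/4 + (3\sqrt{dK}+1)H\theta$, and combining with $|\hat v_t - v^{\pi_t}_{M^*}|\le \epsilon/8$ gives the advertised bound $v^{\pi_t}_{M^*}\ge v^* - \epsilon - (3\sqrt{dK}+2)H\theta$.

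The core technical step is bounding $T$. Whenever the algorithm does \emph{not} terminate, $v_{W_t} - v^{\pi_t}_{M^*} > 3\epsilon/4 + (3\sqrt{dK}+1)H\theta - \epsilon/8$; by expanding the value-difference via the simulation lemma and using the linear structure of the model class, this on-policy gap equals a bilinear form whose measurement matrix is exactly the population version of $\widehat{Z}_t$ with $W=W_t$ and whose unknown is $W_t - W^*$, up to at most $H\theta$ approximation slack and $\epsilon/(12\sqrt{dK})$ estimation slack. Therefore the constraint added at iteration $t$ (Eq.~\eqref{eq:linear_cut}) is satisfied by $W^*$ with slack $\le\epsilon/(12\sqrt{dK}) + H\theta$ but violated by $W_t$ by a margin of $\Omega(\epsilon/\sqrt{dK})$, so $W_t$ is cut away and, crucially, $W_t$ is ``new'' relative to all previous constraints (a ratio of at least $5/3$ in normalized width). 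Wrapping this in the ellipsoid/volume potential argument of Jiang et al.\ (2017), tracked in the $dK$-dimensional space of parameters scaled by $1/\sqrt{dK}$, forces $T\le dK\log(2\sqrt{2K}H/\epsilon)/\log(5/3)$ iterations. Multiplying by $n+n_{\text{eval}}$ per iteration gives the claimed $\widetilde{\C{O}}(d^3K^2H^2/\epsilon^2)$ trajectory bound.

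The main obstacle is this last step: translating the non-termination inequality into a linear constraint that is genuinely violated (not merely ``not obviously satisfied'') by $W_t$, while carrying the $\theta$ slack through the simulation lemma without destroying the constant-factor geometric shrinkage needed to close the ellipsoid argument. Getting the constants to line up—so the margin $\epsilon/\sqrt{dK}$ strictly dominates the combined estimation-plus-approximation tolerance $\epsilon/(12\sqrt{dK}) + H\theta$—is what dictates the specific thresholds appearing in Lines~\ref{line:termination} and Eq.~\eqref{eq:linear_cut}.
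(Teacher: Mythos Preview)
Your overall architecture matches the paper's proof: concentration of $\hat v_t$, $\hat y_t$, $\widehat{Z}_t$; validity of $W^*$ in every version space; correctness at termination via optimism and the simulation lemma; and the ellipsoid iteration bound of \citet{jiang2017contextual}. Two quantitative points in your sketch do not go through as written, however.

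First, a per-entry Hoeffding bound on $\widehat{Z}_t$ is not sharp enough for the stated $n$. The naive route gives $|\langle W,\widehat{Z}_t-Z_t\rangle|\le \|W\|_{1,\infty}\,\|\widehat{Z}_t-Z_t\|_{\infty,1}\le d\, H\sqrt{\log(\cdot)/(2n)}$, a factor $\sqrt{d}$ too loose, which would inflate the final sample complexity to $\widetilde{\C{O}}(d^4K^2H^2/\epsilon^2)$. The paper instead applies Bernstein entrywise and then exploits $\overline{V}_{t,h}\in[0,1]^K$ together with $\phi(s,a)\in\Delta_{d-1}$ (so $\sum_j \phi_j\equiv 1$) to bound the column-wise variance sum by $dH^2$ rather than $d^2H^2$, obtaining $\|\widehat{Z}_t-Z_t\|_{\infty,1}\lesssim H\sqrt{d\log(\cdot)/n}$ and hence only a $d^2K$ factor in $n$.

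Second, and more important for the ellipsoid step, the margin by which $W_t$ violates the new constraint is $\Omega(\epsilon)$, not $\Omega(\epsilon/\sqrt{dK})$. From non-termination plus the evaluation concentration one gets $\C{E}(W_t)\ge \tfrac{\epsilon}{2}+(3\sqrt{dK}+1)H\theta$, and the bilinear decomposition yields $\langle W_t-W^*,Z_t\rangle\ge \tfrac{\epsilon}{2}+3\sqrt{dK}\,H\theta =: \kappa$. Meanwhile every $W\in\C{W}_t$ satisfies $|\langle W-W^*,Z_t\rangle|\le \tfrac{\epsilon}{6\sqrt{dK}}+H\theta =: \gamma$. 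The ellipsoid lemma needs $\gamma/\kappa\le 1/(3\sqrt{p})$ with $p=dK$ to get the $0.6$ volume shrinkage; with $\kappa$ only $\Omega(\epsilon/\sqrt{dK})$ as you wrote, the ratio would be $O(1)$ and the argument would not close. The thresholds in Line~\ref{line:termination} and Eq.~\eqref{eq:linear_cut} are tuned precisely so that $\gamma/\kappa = 1/(3\sqrt{dK})$ on the nose.
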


By setting $d$ and $K$ to appropriate values, we obtain the following sample complexity bounds as corollaries: 
\begin{corollary}[Sample complexity for partitions] Since the state-action partitioning setting (Example~\ref{exm:partition}) is subsumed by the general setup, the sample complexity is again:
\begin{align}
    \widetilde{\C{O}}\Big(\frac{d^3K^2H^2}{\epsilon^2} \log \frac{dKH}{\delta}\Big)
\end{align}
\end{corollary}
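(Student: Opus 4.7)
The plan is to observe that the state-action partitioning setting of Example~\ref{exm:partition} is literally an instance of Definition~\ref{def:main}, and then apply Theorem~\ref{thm:mainbound} verbatim with the partition size playing the role of $d$ and the ensemble size playing the role of $K$. The proof is thus a verification of hypotheses rather than a new derivation.

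First I would check that the feature map $\phi(s,a) = \bigl(\indi[(s,a) \in \Xcal_i]\bigr)_{i=1}^d$ takes values in $\Delta_{d-1}$. Because $\{\Xcal_i\}_{i \in [d]}$ is a disjoint cover of $\Scal \times \Acal$, every $(s,a)$ lies in exactly one cell, so $\phi(s,a)$ is a standard basis vector; in particular its entries are nonnegative and sum to one, which is the requirement imposed in Definition~\ref{def:main}. Next I would note that for any $W \in \C{W}_0$ and any $(s,a) \in \Xcal_i$, the product $W\phi(s,a)$ is just the $i$-th column of $W$, which by definition of $\C{W}_0$ is itself a probability vector over the $K$ base models. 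Hence $P^W$ and $R^W$ from Definition~\ref{def:main} reduce to piecewise convex combinations of the base models' dynamics and rewards, constant on each cell, and the approximation error $\theta$ of Eq.~\eqref{eq:app_error} is well-defined.

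Having embedded the partitioning setting into the general framework, I would simply invoke Theorem~\ref{thm:mainbound}: setting $n_{\text{eval}}$ and $n$ as prescribed there, Algorithm~\ref{alg:explore} terminates after $\widetilde{\C{O}}(d^3 K^2 H^2/\epsilon^2 \cdot \log(dKH/\delta))$ trajectories and returns a policy with the stated value guarantee. No modification of the algorithm or its analysis is required.

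The only step where one might worry about an obstacle is whether any part of the algorithm or proof secretly relies on $\phi(s,a)$ being strictly interior to $\Delta_{d-1}$ or having full rank across the state-action space; inspection of Algorithm~\ref{alg:explore} shows that $\phi$ enters only through the outer products that form $\widehat{Z}_t$ and through the linear cut in Eq.~\eqref{eq:linear_cut}, both of which are well-defined for indicator features. Thus the corollary follows immediately with the same sample complexity exponents as in the general theorem.
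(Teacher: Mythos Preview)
Your proposal is correct and mirrors the paper's own treatment: the corollary is stated without a separate proof, relying solely on the observation that the partition feature map of Example~\ref{exm:partition} is a valid instance of Definition~\ref{def:main}, so Theorem~\ref{thm:mainbound} applies directly. Your explicit verification that $\phi$ lands in $\Delta_{d-1}$ and that the algorithm's use of $\phi$ tolerates indicator features is more detailed than what the paper itself provides, but entirely in the same spirit.
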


\begin{corollary}[Sample complexity for global convex combination] When base models are combined without any dependence on state-action features (Example~\ref{exm:const}), the setting is special case of the general setup with $d=1$. Thus, the sample complexity is:
\begin{align}
    \widetilde{\C{O}}\Big(\frac{K^2H^2}{\epsilon^2} \log \frac{KH}{\delta}\Big)
\end{align}
\end{corollary}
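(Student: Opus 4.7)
The plan is to derive this corollary as a direct instantiation of Theorem~\ref{thm:mainbound} at $d=1$, so the work reduces to (i) verifying that Example~\ref{exm:const} genuinely fits Definition~\ref{def:main} with $d=1$, and (ii) substituting $d=1$ into the bounds already proved.

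For step (i), I would first check that the feature map $\phi(s,a)\equiv 1$ is a valid choice. Since $\Delta_{d-1}$ with $d=1$ is the singleton $\{1\}$, the constant-$1$ map lies in $\Delta_{0}$ trivially. Next, the parameter set $\C{W}_0 = \{W\in[0,1]^{K\times 1}:\sum_{k=1}^K W_{k1}=1\}$ reduces to the probability simplex $\Delta_{K-1}$ over the $K$ base models, so $W\phi(s,a)=W\in\Delta_{K-1}$ produces exactly the global convex combination $P^W(\cdot|s,a)=\sum_k W_k P^k(\cdot|s,a)$ (and analogously for $R^W$) of Example~\ref{exm:const}. The approximation-error quantity $\theta$ specializes accordingly, with the $\min$ taken over $W\in\Delta_{K-1}$.

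For step (ii), I would invoke Theorem~\ref{thm:mainbound} verbatim with $d=1$. The iteration bound becomes
\begin{align*}
T \;=\; K\,\frac{\log\!\big(2\sqrt{2K}H/\epsilon\big)}{\log(5/3)} \;=\; \widetilde{\C{O}}(K),
\end{align*}
the per-iteration sample size becomes $n=\tfrac{1800\,K H^2}{\epsilon^2}\log\tfrac{8KT}{\delta}$, and the evaluation sample size $n_{\text{eval}}=\tfrac{32 H^2}{\epsilon^2}\log\tfrac{4T}{\delta}$ is of lower order. Multiplying the iteration count by the per-iteration cost gives the claimed
\begin{align*}
\widetilde{\C{O}}\!\left(\frac{K^2 H^2}{\epsilon^2}\log\frac{KH}{\delta}\right)
\end{align*}
trajectories. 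The suboptimality guarantee inherited from the theorem is $v^{\pi_T}_{M^*}\ge v^*-\epsilon-(3\sqrt{K}+2)H\theta$, which is the $d=1$ specialization of the general bound.

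There is no substantive obstacle here: the corollary is a clean instantiation, and the only point worth confirming explicitly is that the algorithm's ingredients (the version-space update in Eq.~\eqref{eq:linear_cut}, the optimistic planning step, and the discriminator $\overline{V}_{t,h}$) remain well defined when $W$ is a $K\times 1$ stochastic vector and $\phi\equiv 1$; indeed, the measurement ``matrix'' $\widehat{Z}_t$ collapses to a $K$-vector and Eq.~\eqref{eq:linear_measure} becomes an ordinary inner product in $\mathbb{R}^K$, so Theorem~\ref{thm:mainbound} applies without modification.
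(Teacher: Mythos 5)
Your proposal is correct and matches the paper's (implicit) argument exactly: the corollary is stated as a direct instantiation of Theorem~\ref{thm:mainbound} with $d=1$, and your substitutions of $T$, $n$, and $n_{\text{eval}}$ all check out. The extra verification that $\C{W}_0$ collapses to $\Delta_{K-1}$ and that $\widehat{Z}_t$ becomes a $K$-vector is sound but not something the paper spells out.
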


Our algorithm, therefore, satisfies the requirement of learning a near-optimal policy without any dependence on the $|\Scal|$ or $|\Acal|$. Moreover, we can also account for the approximation error $\theta$ but also incur a cost of $(3\sqrt{dK}+1)H\theta$ in the performance guarantee of the final policy. As we use the projection of value functions through the linear model class, we do not model the complete dynamics of the environment. This leads to an additive loss of $3\sqrt{dK}H\theta$ in value in addition to the best achievable value loss of $2H\theta$ (see Corollary~\ref{cor:val-loss} in the appendix).

\paragraph{Comparison to OLIME \citep{pmlr-v99-sun19a}} Our Algorithm~\ref{alg:explore} shares some structural similarity with the OLIME algorithm proposed by \citet{pmlr-v99-sun19a}, but there are also several important differences. First of all, OLIME in each iteration will pick a time step and take uniformly random actions during data collection, and consequently incur polynomial dependence on $|\Acal|$ in its sample complexity. In comparison, our main data collection step (Line~\ref{line:data}) never takes a random deviation, and we do not pay any dependence on the cardinality of the action space. Secondly, similar to how we project the transition distributions onto a discriminator function (Eq.\eqref{eq:true_marginal} and \eqref{eq:true_projection}), OLIME projects the distributions onto a \emph{static discriminator class} and uses the corresponding integral probability metric (IPM) as a measure of model misfit. In our setting, however, we find that the most efficient and elegant way to extract knowledge from data is to use a \emph{dynamic} discriminator function, $V_{W_t, h}$, which changes from iteration to iteration and depends on the previously collected data. Such a choice of discriminator function allows us to make direct cuts on the parameter space $\C{W}$, whereas OLIME can only make cuts in the value prediction space.

\paragraph{Computational Characteristics} In each iteration, our algorithm computes the optimistic policy within the version space. Therefore, we rely on access to the following \emph{optimistic planning oracle}:
\begin{assumption}[Optimistic planning oracle]
We assume that when given a version space $\C{W}_t$, we can obtain the optimistic model through a single oracle call for $W_t = \argmax_{W \in \C{W}_t} V_{W}$.
\end{assumption}
It is important to note that any version space $\C{W}_t$ that we deal with is always an intersection of half-spaces induced by the linear inequality constraints. Therefore, one would hope to solve the optimistic planning problem in a computationally efficient manner given the nice geometrical form of the version space. However, even for a finite state-action space, we are not aware of any efficient solutions as the planning problem induces bilinear and non-convex constraints despite the linearity assumption. Many recently proposed algorithms also suffer from such a computational difficulty \citep{jiang2017contextual, dann2018oracle, pmlr-v99-sun19a}.

Further, we also assume that for any given $W$, we can compute the optimal policy $\pi_W$ and its value function: our elimination criteria in Eq.~\ref{eq:linear_cut} uses estimates $\widehat{Z}_t$ and $\hat{y}_t$ which in turn depend on the value function. This requirement corresponds to a standard planning oracle, and aligns with the motivation of our setting, as we can delegate these computations to any learning algorithm operating in the simulated environment with the given combination coefficient. Our algorithm, instead, focuses on careful and systematic exploration to minimize the sample complexity in the real world.

\section{MODEL SELECTION WITH CANDIDATE PARTITIONS}
\label{sec:model-selec}
In the previous section we showed that a near-optimal policy can be PAC-learned under our modeling assumptions, where the feature map $\phi: \Scal\times\Acal \to [0, 1]$ is given along with the approximation error $\theta$. In this section, we explore the more interesting and challenging setting where a realizable feature map $\phi$ is unknown, but we know that the realizable $\phi$ belongs to a candidate set $\{\phi_i\}_{i=1}^N$, i.e., the true environment satisfies our modeling assumption in Definition~\ref{def:main} under $\phi = \phi_{i^*}$ for some $i^* \in [N]$ with $\theta_{i^*} = 0$. Note that Definition~\ref{def:main} may be satisfied by multiple $\phi_i$'s; for example, adding redundant features to an already realizable $\phi_{i^*}$ still yields a realizable feature map. In such cases, we consider $\phi_{i^*}$ to be the most succinct feature map among all realizable ones, i.e., the one with the lowest dimensionality. Let $d_i$ denote the dimensionality of $\phi_i$, and $d^* = d_{i^*}$. 

One obvious baseline in this setup is to run Algorithm~\ref{alg:explore} with each $\phi_i$ and select the best policy among the returned ones. This leads to a sample complexity of roughly $\sum_{i=1}^N d_i^3$ (only the dependence on $\{d_i\}_{i=1}^N$ is considered), which can be very inefficient: When there exists $j$ such that $d^* \ll d_j$, we pay for $d_j^3$ which is much greater than the sample complexity of $d^*$; When $\{d_i\}$ are relatively uniform, we pay a linear dependence on $N$, preventing us from competing with a large set of candidate feature maps. 

So the key result we want to obtain is a sample complexity that scales as $(d^*)^3$, possibly with a mild multiplicative overhead dependence on $d^*$ and/or $N$ (e.g., $\log d^*$ and $\log N$). 

\paragraph{Hardness Result for Unstructured $\{\phi_i\}$}~\\
Unfortunately, we show that this is impossible when $\{\phi_i\}$ is unstructured via a lower bound. 
In the lower bound construction, we have an exponentially large set of candidate feature maps, all of which are state space partitions. Each of the partitions has trivial dimensionalities ($d_i=2$, $K=2$), but the sample complexity of learning is exponential, which can only be explained away as $\Omega(N)$. 

\begin{proposition} \label{prop:unstructured_feature_selection}
For the aforementioned problem of learning an $\epsilon$-optimal policy using a candidate feature set of size$N$, no algorithm can achieve $poly(d^*, K, H, 1/\epsilon, 1/\delta, N^{1-\alpha})$ sample complexity for any constant $0 < \alpha < 1$.
\end{proposition}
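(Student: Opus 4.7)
The plan is to exhibit a needle-in-the-haystack family of $N$ MDPs, all satisfying Definition~\ref{def:main} with $d^\star=K=2$ and $H=1$, on which every learner needs $\Omega(N)$ episodes. Since the remaining parameters in the target bound are held at absolute constants, this linear-in-$N$ lower bound is incompatible with a $\text{poly}(d^\star,K,H,1/\epsilon,1/\delta,N^{1-\alpha})$ upper bound once the polynomial's degree is fixed and $\alpha\in(0,1)$ is chosen small enough. I would construct the family as follows: take $\Scal=\{s_0\}$, $\Acal=[N]$, $H=1$, $P_1=\delta_{s_0}$, and two base models $M_1,M_2$ emitting the deterministic rewards $0$ and $1$ respectively. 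For each $i\in[N]$ introduce the two-part partition $\phi_i(s_0,a)=e_2$ if $a=i$ and $\phi_i(s_0,a)=e_1$ otherwise, together with the common parameter $W^\star=I_{2\times 2}\in\C{W}_0$; under $\phi_i$ the induced model $M^{(i)}:=M(W^\star)$ rewards action $i$ with $1$ and every other action with $0$, so Definition~\ref{def:main} is satisfied with $\theta=0$. Each $\phi_i$ has $d_i=2$, and $d=1$ is not realizable because it forces rewards to be uniform across actions, so the minimal realizable dimension is $d^\star=2$.

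Next I would prove the lower bound by a classical needle-in-the-haystack argument. Place a uniform prior on $i^\star\in[N]$ and consider any randomized adaptive algorithm that collects $T$ episodes with action sequence $A_1,\ldots,A_T$. Because rewards are deterministic and equal $\indi[a=i^\star]$, on the event $E=\{i^\star\notin\{A_t\}\}$ the transcript is identically zero with law independent of $i^\star$, so $\Pr[E^c]\le T/N$; conditional on $E$ the posterior over $i^\star$ is uniform on $[N]\setminus\{A_t\}$, so any returned policy's most-likely action matches $i^\star$ with probability at most $1/(N-T)$. Since $v^\star=1$ in $M^{(i^\star)}$ and a stochastic policy's value equals the mass it places on action $i^\star$, an $\epsilon$-optimal policy with $\epsilon<1/2$ must put its mode at $i^\star$, so the algorithm's success probability is bounded by $T/N+1/(N-T)$. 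Demanding this quantity be at least $1-\delta$ for some $\delta<1/3$ forces $T\ge c_0 N$ in expectation over $i^\star$, which lifts to a pointwise bound on the family by Yao's minimax, yielding $T=\Omega(N)$.

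The main obstacle is that indistinguishability step: I need the bound $T/N+1/(N-T)$ to hold uniformly over \emph{adaptive randomized} exploration, which requires conditioning on the algorithm's internal coins and checking carefully that on the no-hit event the transcript's law is genuinely independent of $i^\star$, so adaptive action choices can be analyzed deterministically before averaging back. Once this lower bound is in hand, the polynomial form of the proposition is a quick comparison: with $d^\star,K,H,1/\epsilon,1/\delta$ fixed at $O(1)$, any $\text{poly}(d^\star,K,H,1/\epsilon,1/\delta,N^{1-\alpha})$ upper bound is $O(N^{c(1-\alpha)})$ for the polynomial's degree $c$, and $\Omega(N)$ cannot be dominated by this whenever $c(1-\alpha)<1$; for any fixed polynomial this is arranged by taking $\alpha\in(0,1)$ small enough, contradicting the purported upper bound. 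The remaining structural checks---that $\phi_i\in\Delta_{d-1}$, $W^\star\in\C{W}_0$, rewards lie in $[0,1]$, and $d^\star=2$ is minimal---are immediate from the construction.
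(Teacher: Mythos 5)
Your proof is correct, but it takes a genuinely different (and more self-contained) route than the paper's. The paper builds a complete binary tree of depth $H$ with $|\Acal|=2$, places the $N=2^H$ candidate partitions on the leaves, and invokes the known $\Omega(2^H)$ hardness of the resulting needle-in-a-haystack family from \citet{krishnamurthy2016pac}; you instead flatten the haystack into a single state with $|\Acal|=N$ and $H=1$, and prove the $\Omega(N)$ bound from scratch via the uniform-prior/posterior argument. Both constructions have the same information-theoretic core ($N$ indistinguishable deterministic reward functions, one good arm), and your realizability checks ($\phi_i\in\Delta_1$, $W^\star=I\in\C{W}_0$, $d^\star=2$, non-nestedness) all go through. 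What the paper's version buys is that the hardness survives even with a \emph{constant-size} action space---the exponential cost is hidden entirely in the horizon---which is the more pointed statement in a paper whose upper bounds are explicitly independent of $|\Acal|$, and which is what makes the construction reusable for the state-abstraction corollary in Appendix~\ref{app:abstraction}; your version could be (unfairly, but plausibly) read as ``bandits with many arms are hard.'' What your version buys is a shorter, citation-free argument with an explicit handling of adaptive randomized learners. One small slip in the final comparison step: to get $c(1-\alpha)<1$ for a polynomial of fixed degree $c$ you need $\alpha>1-1/c$, i.e., $\alpha$ close to $1$, not ``small enough''; since the proposition quantifies over \emph{all} $\alpha\in(0,1)$, an $\Omega(N)$ lower bound only rules out bounds of the form $N^{1-\alpha}\cdot\mathrm{poly}(d^\star,K,H,1/\epsilon,1/\delta)$ (degree one in the $N^{1-\alpha}$ slot), which is the reading the paper itself implicitly adopts---so this is a shared informality rather than a defect specific to your argument, but the direction of the inequality should be stated correctly.
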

On a separate note, besides providing formal justification for the structural assumption we will introduce later, this proposition is of independent interest as it also sheds light on the hardness of model selection with state abstractions. We discuss the further implications in Appendix~\ref{app:abstraction}.

\begin{proof}[Proof of Proposition~\ref{prop:unstructured_feature_selection}]
We construct a linear class of MDPs with two base models $M_1$ and $M_2$ in the following way: Consider a complete tree of depth $H$ with a branching factor of $2$. The vertices forming the state space of $M_1$ and $M_2$ and the two outgoing edges in each state are the available actions. Both MDPs share the same deterministic transitions and each non-leaf node yields $0$ reward. Every leaf node yields $+1$ reward in $M_1$ and $0$ in $M_2$. Now we construct a candidate partition set $\{\phi_i\}$ of size $2^H$: for $\phi_i$, the $i$-th leaf node belongs to one equivalence class while all other leaf nodes belong to the other. (Non-leaf nodes can belong to either class as $M_1$ and $M_2$ agree on their transitions and rewards.) 

Observe that the above model class contains a finite family of $2^H$ MDPs, each of which only has 1 rewarding leaf node. Concretely, the MDP whose $i$-th leaf is rewarding is exactly realized under the feature map $\phi_i$, whose corresponding $W^*$ is the identity matrix: the $i$-th leaf yields $+1$ reward as in $M_1$, and all other leaves yield $0$ reward as in $M_2$. Learning in this family of $2^H$ MDPs is provably hard \citep{krishnamurthy2016pac}, as when the rewarding leaf is chosen adversarially, the learner has no choice but to visit almost all leaf nodes to identify the rewarding leaf as long as $\epsilon$ is below a constant threshold. The proposition follows from the fact that in this setting $d^* = 2$, $K=2$, $1/\epsilon$ is a constant, $N = 2^H$, but the sample complexity is $\Omega(2^H)$.
\end{proof}
% \begin{figure}
%     \centering
%     \includegraphics[width=0.9\columnwidth]{tree.png}
%     \caption{Tree MDP used for constructing lower bound instances. For each node, the two actions $a$ and $b$ deterministically move to the connected child node. In both Prop.~\ref{prop:unstructured_feature_selection} and Prop.~\ref{prop:unknown_vstar}, the lower bound instances encodes the optimality information in an arbitrary leaf node $i^*$ among the $2^H$ possibilities.}
%     \label{fig:tree}
% \end{figure}

This lower bound shows the necessity of introducing structural assumptions in $\{\phi_i\}$. Below, we consider a particular structure of \emph{nested partitions} that is natural and enables sample-efficient learning. Similar assumptions have also been considered in the state abstraction literature \citep[e.g.,][]{jiang2015abstraction}.

\paragraph{Nested Partitions as a Structural Assumption}~\\
Consider the case where every $\phi_i$ is a partition. W.l.o.g.~let $d_1 \le d_2 \le \ldots \le d_N$. We assume $\{\phi_i\}$ is nested, meaning that  $\forall (s,a), (s', a')$,
\begin{align*}
    \phi_i(s,a) = \phi_i(s',a') \implies \phi_j(s,a) = \phi_j(s',a'), ~~\forall i \le j.
\end{align*}

While this structural assumption almost allows us to develop sample-efficient algorithms, it is still insufficient as demonstrated by the following hardness result. 

\begin{proposition} \label{prop:unknown_vstar}
Fixing $K=2$, there exist base models $M_1$ and $M_2$ and \emph{nested} state space partitions $\phi_1$ and $\phi_2$, such that it is information-theoretically impossible for any algorithm to obtain poly$(d^*, H, K, 1/\epsilon, 1/\delta)$ sample complexity when an adversary chooses an MDP that satisfies our environmental assumption (Definition~\ref{def:main}) under either $\phi_1$ or $\phi_2$. 
\end{proposition}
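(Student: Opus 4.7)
The plan is to embed a needle-in-a-haystack lower bound inside the $\phi_2$-realizable family while keeping a single ``flat'' instance realizable under $\phi_1$, so that the two families have wildly different $d^*$ but nearly identical observed data, breaking any algorithm required to be polynomial in $d^*$ on both.

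Concretely, I take $K=2$ base models on a depth-$H$ complete binary tree with deterministic transitions: $M_1$ pays $\mathrm{Bern}(1/2)$ at each leaf and $M_2$ pays $\mathrm{Bern}(1/2+2\epsilon)$ at each leaf (both $0$ at internal nodes). Let $\phi_1 \equiv 1$ (so $d_1 = 1$) and let $\phi_2$ give each leaf its own coordinate (so $d_2 = \Theta(2^H)$); these are trivially nested because $\phi_1$ is the coarsest possible partition. The two adversarial choices are (a) the $\phi_1$-realizable MDP $M_1$ itself, with $V^* = 1/2$ and $d^* = 1$, and (b,~$\ell$) for each leaf $\ell \in [2^H]$, the $\phi_2$-realizable MDP that plants reward $\mathrm{Bern}(1/2+2\epsilon)$ at $\ell$ and $\mathrm{Bern}(1/2)$ at every other leaf, which has $V^* = 1/2+2\epsilon$ and $d^* = d_2$, since no constant-weight combination can single out one leaf.

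I would then argue by coupling. Suppose $\mathcal{A}$ is PAC with sample complexity $T(d^*,H,K,1/\epsilon,1/\delta)$ polynomial in its arguments; fixing $K=2$ and $\epsilon,\delta$ to constants, instance (a) forces $T \le p(H)$ for some polynomial $p$. Couple the run of $\mathcal{A}$ on (a) with its run on (b,~$\ell$) by sharing the algorithmic random tape and quantile-coupling the rewards at every non-$\ell$ leaf (where the two marginals agree). On the event $E = \{\text{(a)-run never visits }\ell\}$ the observations of the two runs are identical, so $\mathcal{A}$ takes the same $\le T$ samples and outputs the same policy $\pi^{(a)}$ in both. Averaging over $\ell \sim \mathrm{Unif}([2^H])$ and using that each trajectory reaches exactly one leaf (so $\sum_\ell \#\text{visits to }\ell \le T$), Markov yields $\mathbb{P}_\ell(E^c) \le T/2^H$. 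For $\pi^{(a)}$ to be $\epsilon$-optimal on (b,~$\ell$) it must reach $\ell$ with probability $\ge 1/2$, and since $\sum_\ell \mathbb{P}[\pi^{(a)}\text{ reaches }\ell] = 1$, a second Markov step gives $\E_\ell[\mathbb{P}_{b,\ell}(\text{success},E)] \le 2/2^H$. Combining, the averaged success probability is at most $(T+2)/2^H$; the PAC requirement $\ge 1-\delta$ therefore forces $T = \Omega(2^H)$, contradicting $T = \mathrm{poly}(H)$ once $H$ is large enough.

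The main obstacle is making the coupling valid for an \emph{adaptive} algorithm: $\mathcal{A}$ chooses each trajectory based on previously observed rewards, so one must argue that while the two runs agree on their observations (which happens exactly on $E$) they must also agree on their actions, termination point, and final output. Quantile coupling of the Bernoulli rewards at every non-$\ell$ leaf handles this cleanly, reducing the remainder to the two routine Markov manipulations above.
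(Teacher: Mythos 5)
Your proof is correct and follows essentially the same route as the paper: the identical nested-partition construction ($d_1=1$ flat instance versus $d_2=2^H$ planted-leaf instances on a depth-$H$ binary tree), with the same conclusion that an algorithm forced by its guarantee to stop after $\mathrm{poly}(H)$ trajectories on the flat instance cannot then behave correctly on the planted instances. The only difference is cosmetic: the paper invokes the multi-armed-bandit lower bound of Auer et al.\ as a black box to argue the stopping decision is information-theoretically hard, whereas you prove that step inline via an explicit coupling and counting argument, which is a valid and self-contained substitute.
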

\begin{proof}
We will again use an exponential tree style construction to prove the lower bound. Specifically, we construct two MDPs $M$ and $M'$ which are obtained by combining two base MDPs $M_1$ and $M_2$ using two different partitions $\phi_1$ and $\phi_2$. The specification of $M_1$ and $M_2$ is exactly the same as in the proof of Proposition~\ref{prop:unstructured_feature_selection}. We choose $\phi_1$ to be a partition of size $d_1=1$,  where all nodes are grouped together. $\phi_2$ has size $d_2=2^H$, where each leaf node belongs to a separate group. (As before, which group the inner nodes belong to does not matter.) $\phi_1$ and $\phi_2$ are obviously nested. We construct $M$ that is realizable under $\phi_2$ by randomly choosing a leaf and setting the weights for the convex combination as $(1/2+2\epsilon, 1/2-2\epsilon)$ for that leaf; for all other leafs, the weights are $(1/2,1/2)$. This is equivalent to randomly choosing $M$ from a set of $2^H$ MDPs, each of which has only one \emph{good} leaf node yielding a random reward drawn from $\text{Ber}(1/2+2\epsilon)$ instead of $\text{Ber}(1/2)$. In contrast, $M'$ is such that all leaf nodes yield $\text{Ber}(1/2)$ reward, which is realizable under $\phi_1$ with weights $(1/2, 1/2)$.

Observe that $M$ and $M'$ are exactly the same as the constructions in the proof of the multi-armed bandit lower bound by \citep{auer2002finite} (the number of arms is $2^H$), where it has been shown that distinguishing between $M$ and $M'$ takes $\Omega(2^H/\epsilon^2)$ samples. Now assume towards contradiction that there exists an algorithm that achieves poly$(d^*, H, K, 1/\epsilon, 1/\delta)$ complexity; let $f$ be the specific polynomial in its guarantee. After $f(1, H, 2, 1/\epsilon, 1/\delta)$ trajectories are collected, the algorithm must stop if the true environment is $M'$ to honor the sample complexity guarantee (since $d^* = 1$, $K=2$), and proceed to collect more trajectories if $M$ is the true environment (since $d^* = 2^H$). Making this decision essentially requires distinguishing between $M'$ and $M$ using $f(1, H, 2, 1/\epsilon, 1/\delta) = \textrm{poly}(H)$ trajectories, which contradicts the known hardness result from \cite{auer2002finite}. This proves the statement. 
\end{proof}

Essentially, the lower bound creates a situation where $d_1 \ll d_2$, and the nature may adversarially choose a model such that either $\phi_1$ or $\phi_2$ is  realizable. If $\phi_1$ is realizable, the learner is only allowed a small sample budget and cannot fully explore with $\phi_2$, and if $\phi_1$ is not realizable the learner must do the opposite. The information-theoretic lower bound shows that it is fundamentally hard to distinguish between the two situations: Once the learner explores with $\phi_1$, she cannot decide whether she should stop or move on to $\phi_2$ without collecting a large amount of data. 

This hardness result motivates our last assumption in this section, that the learner knows the value of $v^\star$ (a scalar) as side information. This way, the learner can compare the value of the returned policy in each round to $v^\star$ and effectively decide when to stop. This naturally leads to our Algorithm~\ref{alg:model-selec} that uses a doubling scheme over $\{d_i\}$, with the following sample complexity guarantee. 

\begin{algorithm}[htpb]
    \caption{Model Selection with Nested Partitions}
    \label{alg:model-selec}
    \begin{algorithmic}
    \STATE {\bfseries Input:}$\{\phi_1, \phi_2,\ldots, \phi_N\}$, $\{M_1, \ldots, M_K\}$, $\epsilon$, $\delta$, $v^\star$.
    \STATE $i \rightarrow 0$
    \WHILE{\texttt{True}}
    \STATE Choose $\phi_i$ such that $d_i$ is the largest among $\{d_j: d_j \le 2^i\}$. 
    \STATE Run Algorithm~\ref{alg:explore} on  $\Phi_i$ with $\epsilon_i = \frac{\epsilon}{2}$ and $\delta_i = \frac{\delta}{2N}$.
    \STATE Terminate the sub-routine if $t  > d_iK \log \frac{2\sqrt{2K}H}{\epsilon}/\log \frac{5}{3}$.
    \STATE Let $\pi_i$ be the returned policy (if any). Let $\hat{v}_i$ be the estimated return of $\pi_i$ using $n_{\text{eval}} = \frac{9}{2\epsilon^2}\log\tfrac{2N}{\delta}$ Monte-Carlo trajectories.
    \IF{$\hat{v}_i \ge v^* - \tfrac{2\epsilon}{3}$}
    \STATE Terminate with output $\pi_i$.
    \ENDIF
    \ENDWHILE
    \end{algorithmic}
\end{algorithm}

\begin{theorem}
When Algorithm~\ref{alg:model-selec} is run with the input $v^*$, with probability at least $1-\delta$, it returns a near-optimal policy $\pi$ with $v^{\pi} \ge v^* - \epsilon$ using at most $\tilde{\C{O}}\Big(\frac{d^{*3}K^2H^2}{\epsilon^2} \log d^* \log \frac{d^*KHN}{\delta}\Big)$ samples.
\end{theorem}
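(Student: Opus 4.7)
My plan is to combine three ingredients: the PAC guarantee of Algorithm~\ref{alg:explore} (Theorem~\ref{thm:mainbound}) applied whenever the current feature map happens to be realizable; a Monte Carlo concentration argument that lets us certify a returned policy against the known $v^*$; and a doubling-scheme accounting that keeps the total sample budget dominated by the last outer iteration.

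First I would set up a good event. Over the at most $\lceil \log_2 d^*\rceil+1$ outer iterations run before termination, each invocation of Algorithm~\ref{alg:explore} succeeds with probability $\ge 1-\delta/(2N)$ by Theorem~\ref{thm:mainbound}, and each Monte Carlo estimate $\hat v_i$ satisfies $|\hat v_i - v^{\pi_i}|\le \epsilon/3$ with probability $\ge 1-\delta/(2N)$ by Hoeffding, since returns lie in $[0,1]$ and $n_{\text{eval}} = \tfrac{9}{2\epsilon^2}\log\tfrac{2N}{\delta}$ is calibrated exactly for this slack. A union bound over the (at most $N$) outer rounds gives total failure probability $\le \delta$.

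Next I would prove correctness on the good event. If the algorithm terminates at iteration $i$, the termination condition $\hat v_i \ge v^* - 2\epsilon/3$ combined with the Hoeffding slack yields $v^{\pi_i} \ge \hat v_i - \epsilon/3 \ge v^* - \epsilon$, which is the desired PAC guarantee. To show termination happens fast, let $i^\dagger$ be the first outer index with $2^{i^\dagger}\ge d^*$, so the selected $\phi$ in that iteration has dimension $d\in[d^*,2d^*]$. By the nested-partition assumption, this $\phi$ refines $\phi_{i^*}$, hence $M^*$ is also realizable under it with approximation error $\theta=0$: one lifts $W^*$ to the refined partition by replicating its columns across the sub-cells that together form each coarse cell, obtaining a column-stochastic matrix in $\C{W}_0$ that induces identical $P^W,R^W$. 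Theorem~\ref{thm:mainbound} then guarantees $v^{\pi_{i^\dagger}}\ge v^*-\epsilon/2$, and concentration of $\hat v_{i^\dagger}$ pushes the estimate above $v^*-2\epsilon/3$, triggering termination by iteration $i^\dagger$.

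For sample complexity I would apply Theorem~\ref{thm:mainbound} at each round $i$ with $\epsilon_i=\epsilon/2$ and $\delta_i=\delta/(2N)$, giving per-round cost $\widetilde{\C{O}}(d_i^3 K^2 H^2 \epsilon^{-2}\log(d_i K H N/\delta))$. Because the doubling scheme makes $d_i$ weakly geometric with last value at most $2d^*$, $\sum_i d_i^3$ telescopes to $O((d^*)^3)$; after adding the $O(\log d^*)$ rounds' Monte Carlo evaluation cost of $n_{\text{eval}}$ each and absorbing logarithmic factors, the total matches the claimed $\widetilde{\C{O}}((d^*)^3 K^2 H^2 \epsilon^{-2}\log d^*\log(d^* KHN/\delta))$ bound. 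The step I expect to be most delicate is the realizability-lifting used in the termination argument: one must carefully verify that whenever $\phi_j$ refines $\phi_{i^*}$ (guaranteed by nestedness once $d_j\ge d^*$), $W^*$ admits a bona fide counterpart in the appropriate $\C{W}_0$ for $\phi_j$ inducing identical transitions and rewards, so that Algorithm~\ref{alg:explore}'s realizable-case guarantee engages cleanly and the overhead paid for overshooting $d^*$ by at most a factor of two is all that enters the final bound.
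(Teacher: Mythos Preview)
Your proposal is correct and follows essentially the same route as the paper's own proof: a union bound over the outer rounds, correctness of any returned policy via the Monte Carlo slack against the known $v^*$, guaranteed termination once the doubling reaches a realizable (refining) partition, and a doubling-based sample accounting---your explicit realizability-lifting argument and geometric-sum bound on $\sum_i d_i^3$ are in fact slightly more careful than the paper, which leaves the lifting implicit and simply multiplies by $J=O(\log d^*)$. One small arithmetic glitch (shared with the paper): running Algorithm~\ref{alg:explore} with $\epsilon_i=\epsilon/2$ and a Monte Carlo slack of $\epsilon/3$ only yields $\hat v_{i^\dagger}\ge v^*-5\epsilon/6$, which does not clear the $v^*-2\epsilon/3$ threshold; this is harmless and is fixed by taking $\epsilon_i=\epsilon/3$, which does not affect the $\widetilde{\C{O}}$ bound.
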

\begin{proof}
In Algorithm~\ref{alg:model-selec}, for each partition $i$, we run Algorithm~\ref{alg:explore} until termination or until the sample budget is exhausted. By union bound it is easy to verify that with probability at least $1-\delta$, all calls to Algorithm~\ref{alg:explore} will succeed and the Monte-Carlo estimation of the returned policies will be $(\epsilon/3)$-accurate, and we will only consider this success event in the rest of the proof. 
When the partition under consideration is realizable, we get $v_{M^*}^{\pi_i} \ge v^* - \epsilon/3$, therefore
\begin{align*} \textstyle
\hat{v}^{i} \ge v^{\pi_i} - \frac{\epsilon}{3} \ge v^* - \frac{2\epsilon}{3},
\end{align*}
so the algorithm will terminate after considering a realizable $\phi_i$. Similarly, whenever the algorithm terminates, we have $v^{\pi_i} \ge v^* - \epsilon$. This is because
\begin{align*} \textstyle
    v^{\pi_i} \ge \hat{v}^{i} - \frac{\epsilon}{3} \ge v^* - \epsilon,
\end{align*}
where the last inequality holds thanks to the termination condition of Algorithm~\ref{alg:model-selec}, which relies on knowledge of $v^\star$. 
The total number of iterations of the algorithm is at most $\C{O}(\log d^*)$. Therefore, by taking a union bound over all possible iterations, the sample complexity is
\begin{align*}
\sum_{i=1}^{J} \widetilde{\C{O}}\Big(\frac{d_i^3K^2H^2}{\epsilon^2} \Big) \le \widetilde{\C{O}}\Big(\frac{d^{*3}K^2H^2}{\epsilon^2} \log d^* \Big). & \qedhere
\end{align*}
\end{proof}

\paragraph{Discussion.} Model selection in online learning---especially in the context of sequential decision making---is generally considered very challenging. There has been relatively limited work in the generic setting until recently for some special cases. For instance, \cite{foster2019model} consider the model selection problem in linear contextual bandits with a sequence of nested policy classes with dimensions $d_1 < d_2 < \ldots$. They consider a similar goal of achieving sub-linear regret bounds which only scale with the optimal dimension $d_{m^*}$. In contrast to our result, they do not need to know the achievable value in the environment and give no-regret learning methods in the \emph{knowledge-free} setting. However, this is not contradictory to our lower bound: Due to the extremely delayed reward signal, our construction is equivalent to a multi-armed bandit problem with $2^H$ arms. Our negative result (Proposition~\ref{prop:unknown_vstar}) shows a lower bound on sample complexity which is exponential in horizon, therefore eliminating the possibility of sample efficient and knowledge-free model selection in MDPs.

\section{CONCLUSION}
\label{sec:conc}
In this paper, we proposed a sample efficient model based algorithms which learns a near-optimal policy by approximating the true environment via a feature dependent convex combination of a given ensemble. Our algorithm offers a sample complexity bound which is independent of the size of the environment and only depends on the number of parameters being learnt. In addition, we also consider a model selection problem, show exponential lower bounds and then give sample efficient methods under natural assumptions. The proposed algorithm and its analysis relies on a linearity assumption and shares this aspect with existing exploration methods for rich observation MDPs. We leave the possibility of considering a richer class of convex combinations to future work. Lastly, our work also revisits the open problem of coming up with a computational and sample efficient model based learning algorithm.

\subsubsection*{Acknowledgements}
This work was supported in part by a grant from the Open Philanthropy Project to the Center for Human-Compatible AI, and in part by NSF grant CAREER IIS-1452099. AT would also like to acknowledge the support of a Sloan Research Fellowship. Any opinions, findings, conclusions, or recommendations expressed here are those of the authors and do not necessarily reflect the views of the sponsors.

\bibliography{arxiv}

\onecolumn
\appendix
% !TEX root = main1.tex

\section{Proofs from the main text}
In this section, we provide a detailed proof as well as the key ideas used in the analysis. The proof uses an optimism based template which guarantees that either the algorithm terminates with a near-optimal policy or explores appropriately in the environment. We can show a polynomial sample complexity bound as the algorithm explores for a bounded number of iterations and the number of samples required in each iteration is polynomial in the desired parameters. We start with the key lemmas used in the analysis in Section~\ref{sec:keylem} with the final proof of the main theorem in Section~\ref{sec:mainproof}.

\paragraph{Notation.} As in the main text, we use $\langle X,Y\rangle$ for $\text{Tr}(X^\top Y)$. The notation $\|A\|_F$ denotes the Frobenius norm $\text{Tr}(A^\top A)$. For any matrix $A = (A^1A^2\ldots A^n)$ in $\R^{m\times n}$ with columns $A^i \in \R^m$, we will use $\|A\|_{p,q}$ as the group norm: $\|(\|A^1\|_p ,\| A^2 \|_p ,\ldots , \|A^n\|_p)\|_q$.

\subsection{Key lemmas used in the analysis}
\label{sec:keylem}
For our analysis, we first define a term $\C{E}(W,h)$ for any parameter $W$ which intuitively quantifies the model error at step $h$:
\begin{align}
	\C{E}(W,h) \coloneqq \E_{d^{W}_{M^*,h}}\big[\E_{{M(W)}} \big[r_h + V_{W,h+1}(s_{h+1}) \big| s_h,a_h\big] -  \E_{{M^*}} \big[r_h + V_{W,h+1}(s_{h+1}) \big| s_h,a_h\big]\big]
	\label{eq:per-step-error}
\end{align}

We start with the following lemma which allows us to express the value loss by using a model $M(W)$ in terms of these per-step quantities.
\begin{lemma}[Value decomposition]
\label{lem:vd}
For any $W \in \C{W}$, we can write the difference in two values:
\begin{align}
    v_W - v^{W}_{M^*} = \C{E}(W) \coloneqq \sum_{h=1}^H \C{E}(W,h)
\end{align}
\end{lemma}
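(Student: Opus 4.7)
The plan is to prove this as a standard simulation-lemma-style telescoping identity, using the Bellman equation for $V_{W,h}$ in model $M(W)$ as the algebraic bridge between the two sides.

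First, write $\pi \coloneqq \pi_W$ and recall the Bellman identity in $M(W)$:
\begin{align*}
V_{W,h}(s) = \E_{a \sim \pi(\cdot \mid s,h)}\Big[\E_{M(W)}\big[r_h + V_{W,h+1}(s_{h+1})\,\big|\, s_h=s, a_h=a\big]\Big].
\end{align*}
Next I would introduce the usual hybrid rollout. For each $h \in [H+1]$, let
\begin{align*}
U_h \coloneqq \E\Big[\sum_{h'=1}^{h-1} r_{h'} + V_{W,h}(s_h)\Big],
\end{align*}
where the trajectory $(s_1,a_1,r_1,\ldots,s_{h-1},a_{h-1},r_{h-1},s_h)$ is generated by running $\pi$ in $M^*$ (with $V_{W,H+1}\equiv 0$). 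By construction $U_1 = \E_{s_1\sim P_1}[V_{W,1}(s_1)] = v_W$ and $U_{H+1} = v^W_{M^*}$.

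Then I would compute the increment $U_h - U_{h+1}$. Since $(s_h,a_h)$ has marginal $d^W_{M^*,h}$ under the rollout, using the Bellman identity on the $U_h$ term and the $M^*$-dynamics on the $U_{h+1}$ term yields
\begin{align*}
U_h - U_{h+1} = \E_{d^W_{M^*,h}}\Big[\E_{M(W)}\big[r_h+V_{W,h+1}(s_{h+1})\,\big|\, s_h,a_h\big] - \E_{M^*}\big[r_h+V_{W,h+1}(s_{h+1})\,\big|\, s_h,a_h\big]\Big] = \C{E}(W,h).
\end{align*}
The prefix $\sum_{h'=1}^{h-1} r_{h'}$ is common to both $U_h$ and $U_{h+1}$ and cancels, and taking expectation over $a_h \sim \pi(\cdot \mid s_h,h)$ folds the action into the conditional expectation.

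Finally, I would telescope: $v_W - v^W_{M^*} = U_1 - U_{H+1} = \sum_{h=1}^H (U_h - U_{h+1}) = \sum_{h=1}^H \C{E}(W,h)$. The only potential obstacle is keeping the hybrid rollout unambiguous, in particular making clear that $V_{W,h+1}$ (the optimal value function of $M(W)$) appears in both conditional expectations inside $\C{E}(W,h)$ and that only the one-step transition/reward distributions differ; once the hybrids are defined carefully this is a routine telescoping argument.
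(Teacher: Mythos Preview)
Your proposal is correct and is essentially the same simulation-lemma/performance-difference argument as the paper's proof. The only cosmetic difference is presentation: the paper starts from $v_W - v^{W}_{M^*}$, adds and subtracts $V_{W,2}(s_2)$ to peel off $\C{E}(W,1)$ and leave the residual $\E_{d^{W}_{M^*,2}}[V_{W,2}(s_2)-V^{W}_{M^*,2}(s_2)]$, then unrolls recursively, whereas you package the same recursion into the hybrids $U_h$ and telescope explicitly.
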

\begin{proof}
We start with the value difference on the lhs:
\begin{align*}
v_W - v^{W}_{M^*} = {} & \E_{d^W_{M^*,1}} \big[\E_{M(W)} \big[r_1 + V_{W,2}(s_2) \big| s_1,a_1\big] - \E_{M^*} \big[r_1 + V^{W}_{M^*,2}(s_2) \big| s_1,a_1\big] \big]  \\
= {} & \E_{d^W_{M^*,1}} \big[\E_{M(W)} \big[r_1 + V_{W,2}(s_2) \big| s_1,a_1\big] - \E_{M^*} \big[r_1 + V_{W,2}(s_2) - V_{W,2}(s_2) + V^{W}_{M^*,2}(s_2) \big| s_1,a_1\big] \big]  \\
= {} & \E_{d^{W}_{M^*,1}}\big[\E_{{M(W)}} \big[r_1 + V_{W,2}(s_{2}) \big| s_1,a_1\big] -  \E_{{M^*}} \big[r_1 + V_{W,2}(s_{2}) \big| s_1,a_1\big]\big] \\
{} & + \E_{d^{W}_{M^*,2}}\big[ V_{W,2}(s_2) - V^W_{M^*,2}(s_2)\big]\\
= {} & \C{E}(W,1)  + \E_{d^{W}_{M^*,2}}\big[ \E_{M(W)} \big[r_2 + V_{W,3}(s_3) \big| s_2,a_2\big] - \E_{M^*} \big[r_2 + V^{W}_{M^*,3}(s_3) \big| s_2,a_2\big] \big]
\end{align*}
Unrolling the second expected value similarly till $H$ leads to the desired result.
\end{proof}

At various places in our analysis, we will use the well-known simulation lemma to compare the value of a policy $\pi$ across two MDPs:
\begin{lemma}[Simulation Lemma \citep{kearns2002near, modi2018markov}]
\label{lem:sim-lemma}
Let $M_1$ and $M_2$ be two MDPs with the same state-action space. If the transition dynamics and reward functions of the two MDPs are such that:
\begin{align*}
    \|{P}^1(\cdot|s,a) - {P}^2(\cdot|s,a)\|_1 \le {} & \epsilon_p \qquad \forall s\ \in \Scal, a \in \Acal\\
    |{R}^1(\cdot|s,a) - {R}^2(\cdot|s,a)| \le {} & \epsilon_r \qquad \forall s\ \in \Scal, a \in \Acal
\end{align*}
then, for every policy $\pi$, we have:
\begin{align}
    |v^{\pi}_{M_1} - v^{\pi}_{M_2}| \le H\epsilon_p + \epsilon_r
\end{align}
\end{lemma}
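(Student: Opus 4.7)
The plan is to carry out the standard performance-difference (telescoping) argument that already underlies the proof of Lemma~\ref{lem:vd}, only this time comparing a fixed external policy $\pi$ across two arbitrary MDPs $M_1, M_2$ that share the same state-action space, rather than comparing a model's own optimal policy across two models. Structurally, the entire proof amounts to a relabeling of the algebra already executed in the proof of Lemma~\ref{lem:vd}.

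First, I would peel off one step at a time by inserting and subtracting the continuation value $V^\pi_{M_2,h+1}$ inside the Bellman backup at step $h$ taken under the $M_1$-induced occupancy. After telescoping over the sequel-value contributions, this yields
\begin{align*}
v^\pi_{M_1} - v^\pi_{M_2} = \sum_{h=1}^H \E_{(s_h,a_h)\sim d^\pi_{M_1,h}}\Big[\E_{M_1}\!\left[r_h + V^\pi_{M_2,h+1}(s_{h+1}) \,\bigm\vert\, s_h,a_h\right] - \E_{M_2}\!\left[r_h + V^\pi_{M_2,h+1}(s_{h+1}) \,\bigm\vert\, s_h,a_h\right]\Big],
\end{align*}
which is exactly the identity established in the proof of Lemma~\ref{lem:vd} with $(M(W), M^*, \pi_W)$ replaced by $(M_1, M_2, \pi)$.

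Second, I would bound each inner expectation discrepancy pointwise in $(s_h,a_h)$ by splitting it into a reward contribution and a transition contribution. The reward contribution is controlled by $\epsilon_r$ by the hypothesis on $R^1$ and $R^2$. The transition contribution $\sum_{s'}(P^1-P^2)(s'\mid s_h,a_h)\,V^\pi_{M_2,h+1}(s')$ is bounded via $L_1$--$L_\infty$ duality by $\|P^1(\cdot\mid s_h,a_h)-P^2(\cdot\mid s_h,a_h)\|_1 \cdot \|V^\pi_{M_2,h+1}\|_\infty \le \epsilon_p$, where the uniform bound $\|V^\pi_{M_2,h+1}\|_\infty \le 1$ uses the paper's standing normalization that the total episode reward lies in $[0,1]$ almost surely.

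Summing over $h=1,\ldots,H$ and applying the triangle inequality recovers the stated bound $|v^\pi_{M_1} - v^\pi_{M_2}| \le H\epsilon_p + \epsilon_r$: the $H$ factor arises from aggregating the per-step transition errors, each scaled by the bounded continuation value, while the cumulative reward discrepancy is absorbed into the single $\epsilon_r$ term under the $[0,1]$-total-reward normalization. There is no real obstacle here; the only nontrivial step is setting up the telescoping identity correctly, and that was already executed verbatim in the proof of Lemma~\ref{lem:vd}. The remaining per-step bounds are routine $L_1$--$L_\infty$ estimates and the normalization supplies the final constants.
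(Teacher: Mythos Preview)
The paper does not supply its own proof of this lemma; it is simply cited from \citet{kearns2002near} and \citet{modi2018markov}. So there is no ``paper's approach'' to compare against, and your telescoping decomposition is exactly the standard argument one expects here.

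There is, however, a genuine gap in your treatment of the reward term. After telescoping you correctly arrive at
\[
v^\pi_{M_1} - v^\pi_{M_2} \;=\; \sum_{h=1}^H \E_{d^\pi_{M_1,h}}\Big[\big(\E_{R^1}[r_h]-\E_{R^2}[r_h]\big) \;+\; \big\langle P^1-P^2,\; V^\pi_{M_2,h+1}\big\rangle\Big].
\]
The transition part is handled as you describe: $\|V^\pi_{M_2,h+1}\|_\infty \le 1$ from the total-reward normalization, so each summand contributes at most $\epsilon_p$ and the sum is at most $H\epsilon_p$. But for the reward part, the per-step hypothesis $|R^1(\cdot\mid s,a)-R^2(\cdot\mid s,a)|\le\epsilon_r$ only gives $|\E_{R^1}[r_h]-\E_{R^2}[r_h]|\le \epsilon_r$ at each $h$, and summing over $h$ yields $H\epsilon_r$, not $\epsilon_r$. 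Your sentence ``the cumulative reward discrepancy is absorbed into the single $\epsilon_r$ term under the $[0,1]$-total-reward normalization'' does not hold up: the normalization bounds $\sum_h r_h$ and hence $\|V^\pi_{\cdot,h}\|_\infty$, but it does \emph{not} bound $\sum_h |\bar R^1(s_h,a_h)-\bar R^2(s_h,a_h)|$, which is a sum of $H$ terms each of size up to $\epsilon_r$.

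In short, your argument as written proves $|v^\pi_{M_1}-v^\pi_{M_2}| \le H(\epsilon_p+\epsilon_r)$, which is the textbook form of the simulation lemma and is what the paper actually uses (every invocation in the paper, e.g.\ Corollary~\ref{cor:val-loss} and the end of Lemma~\ref{lem:iteration-complexity}, applies it with $\epsilon_p+\epsilon_r\le\theta$ to conclude an $H\theta$ bound). The asymmetry $H\epsilon_p+\epsilon_r$ in the stated inequality appears to be a typo in the lemma statement rather than something your proof is failing to reach; you should not paper over it with an unjustified appeal to the normalization.
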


Now, we will first use the assumption about linearity to prove the following key lemma of our analysis:
\begin{lemma}[Decomposition of $\C{E}(W)$]
\label{lem:error_decomp}
If $\theta$ is the approximation error defined in eq.~\ref{eq:app_error}, then the quantity $\C{E}(W)$ can be bounded as follows:
\begin{align}
\C{E}(W) \leq \big\langle W-W^*, \sum_{h=1}^H \E_{d^{W}_{M^*,h}} \big[ \overline{V}_{W,h}(s_h,a_h) \phi(s_h,a_h)^\top \big] \big\rangle + H\theta
\end{align} 
where $\overline{V}_{W,h}(s_h,a_h) \in [0,1]^K$ is a vector with the $k^{\text{th}}$ entry as $\E_{M_k}\big[ r_h + V_{W,h+1}(s_{h+1})|s_h,a_h\big]$.
\end{lemma}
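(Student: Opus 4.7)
The core idea is to insert the reference model $M(W^*)$ as an intermediate in each per-step error $\C{E}(W,h)$, splitting it into (i) a linear-in-$(W-W^*)$ term coming from $M(W)$ vs.\ $M(W^*)$, and (ii) an approximation-error term coming from $M(W^*)$ vs.\ $M^*$. Summing over $h$ will then yield the claim.

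First I would add and subtract $\E_{M(W^*)}[r_h + V_{W,h+1}(s_{h+1})\mid s_h,a_h]$ inside the definition of $\C{E}(W,h)$ in Eq.~\eqref{eq:per-step-error}, obtaining two pieces: $A_h(s,a) := \E_{M(W)}[\,\cdot\,\mid s,a] - \E_{M(W^*)}[\,\cdot\,\mid s,a]$ and $B_h(s,a) := \E_{M(W^*)}[\,\cdot\,\mid s,a] - \E_{M^*}[\,\cdot\,\mid s,a]$, each averaged under $d^W_{M^*,h}$.

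Next, for term $A_h$, I would use Definition~\ref{def:main} to rewrite, for any parameter $W'$,
\begin{align*}
\E_{M(W')}[r_h + V_{W,h+1}(s_{h+1})\mid s_h,a_h] = \sum_{k=1}^K [W'\phi(s_h,a_h)]_k\, \big[\overline{V}_{W,h}(s_h,a_h)\big]_k = \big\langle W',\ \overline{V}_{W,h}(s_h,a_h)\,\phi(s_h,a_h)^\top\big\rangle,
\end{align*}
where the first equality just plugs in $P^{W'}, R^{W'}$ as linear combinations of the $K$ base models' reward/transition distributions (and recognizes the bracketed expectation as $[\overline{V}_{W,h}]_k$), and the second rewrites the dot product as a trace inner product with the rank-one matrix $\overline{V}_{W,h}(s,a)\phi(s,a)^\top$. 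Subtracting the $W'=W^*$ version from the $W'=W$ version gives $A_h(s,a) = \langle W-W^*,\ \overline{V}_{W,h}(s,a)\phi(s,a)^\top\rangle$, which is exactly the integrand of the bound we want to prove.

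For term $B_h$, I would invoke the standard one-step simulation-style bound: since $V_{W,h+1}(\cdot)\in[0,1]$ by the assumption that cumulative reward lies in $[0,1]$, and since by definition of $W^*$ and $\theta$ in Eq.~\eqref{eq:app_error} we have $\|P^*(\cdot|s,a)-P^{W^*}(\cdot|s,a)\|_1 + \|R^*(\cdot|s,a)-R^{W^*}(\cdot|s,a)\|_1 \le \theta$ uniformly, a Hölder argument gives $|B_h(s,a)|\le \theta$ pointwise. Averaging under $d^W_{M^*,h}$ preserves this bound. Combining the two pieces yields $\C{E}(W,h) \le \E_{d^W_{M^*,h}}[\langle W-W^*,\overline{V}_{W,h}(s_h,a_h)\phi(s_h,a_h)^\top\rangle] + \theta$, and summing $h=1,\ldots,H$ (pulling the constant matrix $(W-W^*)$ out of the sum and expectation by linearity of the trace inner product) gives the stated inequality with total additive slack $H\theta$.

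The only mildly delicate step is the algebraic rewriting $\E_{M(W)}[r_h+V_{W,h+1}(s')\mid s,a] = \langle W, \overline{V}_{W,h}(s,a)\phi(s,a)^\top\rangle$; it is a direct consequence of Definition~\ref{def:main} plus linearity of expectation, but one has to be careful that $V_{W,h+1}$ is held fixed (it is the optimal value function in $M(W)$, not $M(W^*)$) so that only the outermost expectation over $(r_h, s_{h+1})$ depends on the parameter being varied. Once that rewriting is in hand, the rest is bookkeeping.
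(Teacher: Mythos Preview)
Your proposal is correct and follows essentially the same approach as the paper: insert $M(W^*)$ as an intermediate, bound the $M(W^*)$-versus-$M^*$ term by $\theta$ per step (the paper does this in one line as $\E_{d^W_{M^*,h}}[\theta]$, which is exactly your H\"older argument), and rewrite the $M(W)$-versus-$M(W^*)$ term as $\langle W-W^*, \overline{V}_{W,h}(s,a)\phi(s,a)^\top\rangle$ via Definition~\ref{def:main} and linearity of expectation. Your exposition is in fact slightly more careful than the paper's (e.g., you explicitly note that $V_{W,h+1}$ is held fixed so only the outer expectation varies with the parameter), but the argument is the same.
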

\begin{proof}
Using the definition of $\C{E}(W,h)$ from eq.~\ref{eq:per-step-error}, we rewrite the term as:
\begin{align*}
\C{E}(W) = {} & \sum_{h=1}^H \C{E}(W,h)\\
= {} & \E_{d^{W}_{M^*,h}}\big[\E_{M(W)} \big[r_h + V_{W,h+1}(s_{h+1}) \big| s_h,a_h\big] -  \E_{M^*} \big[r_h + V_{W,h+1}(s_{h+1}) \big| s_h,a_h\big]\big]\\
= {} & \sum_{h=1}^H \E_{d^{W}_{M^*,h}}\big[\E_{M(W)} \big[r_h + V_{W,h+1}(s_{h+1}) \big| s_h,a_h\big] -  \E_{M(W^*)} \big[r_h + V_{W,h+1}(s_{h+1}) \big| s_h,a_h\big]\big]\\
{} & \quad + \E_{d^{W}_{M^*,h}}\big[\E_{M(W^*)} \big[r_h + V_{W,h+1}(s_{h+1}) \big| s_h,a_h\big] -  \E_{M^*} \big[r_h + V_{W,h+1}(s_{h+1}) \big| s_h,a_h\big]\big]\\
\leq {} & \sum_{h=1}^H \E_{d^{W}_{M^*,h}}\big[\E_{M(W)} \big[r_h + V_{W,h+1}(s_{h+1}) \big| s_h,a_h\big] -  \E_{M(W^*)} \big[r_h + V_{W,h+1}(s_{h+1}) \big| s_h,a_h\big]\big] + \E_{d^{W}_{M^*,h}}\big[\theta\big]\end{align*}
Here, we rewrite the inner expectation as:
\begin{align*}
\E_{M(W)} \big[r_h + V_{W,h+1}(s_{h+1}) \big| s_h,a_h\big] = {} & \sum_{k=1}^K \big(W\phi(s_h,a_h)\big)[k] \E_{M_k}\big[r_h + V_{W,h+1}(s_{h+1}) \big| s_h,a_h\big]\\
= {} & \E_{d^{W}_{M^*,h}}\Big[ \big\langle W\phi(s_h,a_h), \overline{V}_{W,h}(s_h,a_h)\big\rangle\\
= {} & \big\langle W, \E_{d^{W}_{M^*,h}} \big[ \overline{V}_{W,h}(s_h,a_h) \phi(s_h,a_h)^\top \big]\big\rangle
\end{align*}
where $\overline{V}_{W,h}(s_h,a_h) \in [0,1]^K$ is a vector with the $k^{\text{th}}$ entry as $\E_{M_k}\big[ r_h + V_{W,h+1}(s_{h+1})|s_h,a_h\big]$. Therefore, we can finally upper bound $\C{E}(W)$ by:
\begin{align*}
\C{E}(W) \leq {} & \big\langle W - W^*, \sum_{h=1}^H \E_{d^{W}_{M^*,h}} \big[ \overline{V}_{W,h}(s_h,a_h) \phi(s_h,a_h)^\top \big]\big\rangle + H\theta
\end{align*}
\end{proof}
For conciseness, we use the notation $V_{t,h}$ for the vector $V_{W_t,h}$. We write the matrix $\sum_{h=1}^H \E_{d^{W}_{M^*,h}} \big[ \overline{V}_{W,h}(s_h,a_h) \phi(s_h,a_h)^\top$ as $Z_W$ and further use $Z_t$ for $Z_{W_t}$ which results in the bound:
\begin{align*}
\C{E}(W) \leq \big\langle W-W^*, Z_W \big\rangle + H\theta
\end{align*}

Further, using Lemma~\ref{lem:sim-lemma}, one can easily see the following result which we later use in Lemma~\ref{lem:explore-exploit}:
\begin{corollary}
\label{cor:val-loss}
For the true environment and the MDP $M(W^*)$, we have:
\begin{align}
\C{E}(W^*) \leq \big|v_{W^*} - v^{W^*}_{M^*}\big| \leq {} & H\theta
\label{eq:opt-diff}
\end{align}
\begin{align}
v^{W^*}_{M^*}  \geq {} & v^* - 2H\theta
\label{eq:val-loss}
\end{align}
\end{corollary}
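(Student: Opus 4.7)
The plan is to get both inequalities essentially for free by combining the value decomposition in Lemma~\ref{lem:vd} with the simulation lemma (Lemma~\ref{lem:sim-lemma}) applied to $M(W^*)$ and $M^*$. The key observation is that the definition of $\theta$ in Eq.~\eqref{eq:app_error} is precisely the worst-case (over $(s,a)$) sum of the transition $L_1$-error and the reward error between $M^*$ and the best-fitting model $M(W^*)$, so the hypotheses of the simulation lemma are satisfied with $\epsilon_p + \epsilon_r \le \theta$. Because total return lies in $[0,1]$, the simulation lemma yields the uniform bound
\[
\bigl|v^{\pi}_{M(W^*)} - v^{\pi}_{M^*}\bigr| \le H\theta \qquad \text{for every policy } \pi.
\]

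For the first display \eqref{eq:opt-diff}, I would first apply Lemma~\ref{lem:vd} with $W=W^*$ to obtain the exact identity $v_{W^*} - v^{W^*}_{M^*} = \C{E}(W^*)$; the left inequality $\C{E}(W^*) \le |v_{W^*} - v^{W^*}_{M^*}|$ is then immediate. For the right inequality, I instantiate the simulation-lemma bound above with $\pi = \pi_{W^*}$, noting that $v_{W^*} = v^{\pi_{W^*}}_{M(W^*)}$ and $v^{W^*}_{M^*} = v^{\pi_{W^*}}_{M^*}$, which gives $|v_{W^*} - v^{W^*}_{M^*}| \le H\theta$.

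For the second display \eqref{eq:val-loss}, I would chain three comparisons. Starting from $v^{W^*}_{M^*} = v^{\pi_{W^*}}_{M^*}$, apply the simulation-lemma bound with $\pi = \pi_{W^*}$ to move from $M^*$ to $M(W^*)$, losing at most $H\theta$:
\[
v^{W^*}_{M^*} \ge v^{\pi_{W^*}}_{M(W^*)} - H\theta = v_{W^*} - H\theta.
\]
Then, since $\pi_{W^*}$ is optimal for $M(W^*)$, we have $v_{W^*} \ge v^{\pi^*}_{M(W^*)}$. Finally, apply the simulation-lemma bound a second time with $\pi = \pi^*$ to move back from $M(W^*)$ to $M^*$, losing another $H\theta$, which yields $v^{\pi^*}_{M(W^*)} \ge v^{\pi^*}_{M^*} - H\theta = v^* - H\theta$. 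Adding the two losses gives the claimed $v^{W^*}_{M^*} \ge v^* - 2H\theta$.

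The only step that needs any care is the reconciliation of the definition of $\theta$ in \eqref{eq:app_error}, which bundles the transition and reward errors into a single supremum, with the simulation lemma as stated, which takes separate bounds $\epsilon_p$ and $\epsilon_r$. Since $\epsilon_p + \epsilon_r \le \theta$ pointwise and total returns lie in $[0,1]$, one can simply apply the lemma with the split induced by the definition (or just use $\epsilon_p = \theta,\epsilon_r = 0$ in the worst case together with the $[0,1]$-return normalization) to conclude $H\epsilon_p + \epsilon_r \le H\theta$. No other obstacles are expected; both bounds then follow by elementary chaining.
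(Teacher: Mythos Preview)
Your proposal is correct and matches the paper's own proof essentially step for step: the paper also derives \eqref{eq:opt-diff} directly from the simulation lemma (together with the identity from Lemma~\ref{lem:vd}), and obtains \eqref{eq:val-loss} via the same three-link chain $v^{W^*}_{M^*} \ge v_{W^*} - H\theta \ge v^{\pi^*}_{W^*} - H\theta \ge v^* - 2H\theta$. Your explicit remark on reconciling the $\epsilon_p/\epsilon_r$ split in Lemma~\ref{lem:sim-lemma} with the bundled definition of $\theta$ is a nice clarification that the paper leaves implicit.
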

\begin{proof}
Eq.~\ref{eq:opt-diff} directly follows through from the assumption and Lemma~\ref{lem:sim-lemma}. For eq.~\ref{eq:val-loss}, we have:
\begin{align*}
v^{W^*}_{M^*} \geq {} & v_{W^*} - H\theta \\
\geq {} & v^{\pi^*}_{W^*} - H\theta \\
\geq {} & v^* - 2H\theta
\end{align*}
\end{proof}

By lemma~\ref{lem:vd}, we see that if the model-misfit error is controlled at each timestep, we can directly get a bound on the value loss incurred by using the greedy policy $\pi_{W}$. In Algorithm~\ref{alg:explore}, we choose the optimistic policy $W_t$ as the exploration policy which has the following property:
\begin{lemma}[Explore-or-terminate]
\label{lem:explore-exploit}
If the estimate $\hat{v}_t$ from eq.~\ref{algeq:misfit-error} satisfies the following inequality:
\begin{align}
\label{eq:model-misfit-conc}
    \Big|\hat{v}_t - v^{W_t}_{M^*}\Big| \leq {} & \tfrac{\epsilon}{4}
\end{align}
throughout the execution of the algorithm and $W^*$ is not eliminated from any $\C{W}_t$ (version space is valid), then either of these two statements hold:
\begin{enumerate}[label=(\roman*)]
    \item the algorithm terminates with output $\pi_t$ such that $v^{\pi_t}_{W^*} \ge v^* - (3\sqrt{dK}+2)H\theta - \epsilon$
    \item the algorithm does not terminate and
    \begin{align*}
        \C{E}(W_t) \ge \frac{\epsilon}{2} + 3\sqrt{dK}H\theta + H\theta
    \end{align*}
\end{enumerate}
\end{lemma}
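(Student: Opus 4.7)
\medskip

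\noindent\textbf{Proof proposal.} The plan is a direct case split on whether the termination condition in Line~\ref{line:termination} of Algorithm~\ref{alg:explore} is triggered at iteration $t$. Both directions use only (a) the assumption that $W^\star \in \C{W}_{t-1}$ so optimism applies, (b) the concentration hypothesis $|\hat v_t - v^{W_t}_{M^\star}|\le \epsilon/4$, and (c) the previously established Lemma~\ref{lem:vd} and Corollary~\ref{cor:val-loss}.

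\medskip

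\noindent\emph{Step 1 (optimism reservoir).} Since $W_t=\argmax_{W\in\C{W}_{t-1}} V_W$ and $W^\star\in\C{W}_{t-1}$ by hypothesis, we have $v_{W_t}\ge v_{W^\star}$. Combining this with inequality~\eqref{eq:val-loss} of Corollary~\ref{cor:val-loss} (which says $v^{W^\star}_{M^\star}\ge v^\star-2H\theta$, proved via the simulation lemma) and the trivial relation $v_{W^\star}\ge v^{\pi^\star}_{W^\star}\ge v^\star-H\theta$ from the simulation lemma applied to $M(W^\star)$ and $M^\star$, we get
\begin{align*}
v_{W_t}\ \ge\ v_{W^\star}\ \ge\ v^\star-H\theta.
\end{align*}
This is the key inequality that couples the optimistic planned value to the true optimum.

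\medskip

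\noindent\emph{Step 2 (termination branch).} Suppose the algorithm terminates at iteration $t$, so the test $v_{W_t}-\hat v_t\le 3\epsilon/4+(3\sqrt{dK}+1)H\theta$ holds. Chain the three inequalities: the concentration hypothesis gives $v^{W_t}_{M^\star}\ge \hat v_t-\epsilon/4$, the termination test gives $\hat v_t\ge v_{W_t}-3\epsilon/4-(3\sqrt{dK}+1)H\theta$, and Step~1 gives $v_{W_t}\ge v^\star-H\theta$. Concatenating yields
\begin{align*}
v^{\pi_t}_{M^\star}=v^{W_t}_{M^\star}\ \ge\ v^\star-\epsilon-(3\sqrt{dK}+2)H\theta,
\end{align*}
which is exactly conclusion (i).

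\medskip

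\noindent\emph{Step 3 (non-termination branch).} Suppose the test fails, i.e.\ $v_{W_t}-\hat v_t>3\epsilon/4+(3\sqrt{dK}+1)H\theta$. By Lemma~\ref{lem:vd} applied at $W=W_t$, $\C{E}(W_t)=v_{W_t}-v^{W_t}_{M^\star}$. The concentration hypothesis gives $v^{W_t}_{M^\star}\le \hat v_t+\epsilon/4$, hence
\begin{align*}
\C{E}(W_t)\ \ge\ v_{W_t}-\hat v_t-\tfrac{\epsilon}{4}\ >\ \tfrac{\epsilon}{2}+(3\sqrt{dK}+1)H\theta\ =\ \tfrac{\epsilon}{2}+3\sqrt{dK}H\theta+H\theta,
\end{align*}
which is conclusion (ii). No step requires more than elementary algebra; the only ``obstacle'' is bookkeeping of the $H\theta$ constants, specifically checking that the $(3\sqrt{dK}+1)H\theta$ slack in the termination test is exactly what is needed both to absorb the $-H\theta$ loss from optimism on the terminate side and to produce the $3\sqrt{dK}H\theta+H\theta$ exploration threshold on the non-terminate side. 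Since these constants were clearly chosen to make the split work, the argument closes cleanly once the optimism step and the value-decomposition identity are in place.
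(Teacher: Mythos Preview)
Your proof is correct and follows essentially the same approach as the paper: a direct case split on the termination test, invoking the concentration hypothesis, optimism ($W^\star\in\C{W}_{t-1}$ so $v_{W_t}\ge v_{W^\star}$), the value-decomposition identity $\C{E}(W_t)=v_{W_t}-v^{W_t}_{M^\star}$ from Lemma~\ref{lem:vd}, and the simulation-lemma bound $v_{W^\star}\ge v^\star-H\theta$. One cosmetic remark: in Step~1 you cite inequality~\eqref{eq:val-loss} (the $2H\theta$ bound on $v^{W^\star}_{M^\star}$) but never actually use it---what you use, and what the paper uses, is the intermediate step $v_{W^\star}\ge v^{\pi^\star}_{W^\star}\ge v^\star-H\theta$; you may want to clean that citation up.
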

\begin{proof}
If the algorithm doesn't terminate, then by the condition on line~\ref{line:termination} and the assumption, we know that: 
\begin{align*}
    \C{E}(W_t) = v_{W_t} - v^{W_t}_{W^*} \ge v_{W_t} - \widehat{v}_t - \tfrac{\epsilon}{4} \ge \tfrac{\epsilon}{2} + 3\sqrt{dK}H\theta + H\theta
\end{align*}
If the algorithm does terminate at step $T$, we have:
\begin{flalign*}
&&v^{\pi_T}_{M^*} & \geq \hat{v}_t - \epsilon/4 && \text{(Eq.~\ref{eq:model-misfit-conc})}\\
&& &\geq v_{W_T} - (3\sqrt{dK}+1)H\theta - \epsilon && \text{(Alg. \ref{alg:explore}, Line \ref{line:termination})}\\
&& &\geq v_{W^*} - (3\sqrt{dK}+1)H\theta - \epsilon && \text{(Optimism)}\\
&& &\geq v^* - (3\sqrt{dK}+2)H\theta - \epsilon && \text{(Lemma~\ref{cor:val-loss})}
\end{flalign*}
\end{proof}

Lemma~\ref{lem:explore-exploit} shows that either the algorithm terminates with a (near-)optimal policy or guarantees large model-misfit error for $W_t$. For bounding the number of iterations, we use a volumetric argument similar to \cite{jiang2017contextual}. We will use the following Lemma to show the exponential rate of reduction in the volume of the version space:
\begin{lemma}[Volume reduction for MVEE, \citep{jiang2017contextual}]
\label{lem:ell}
Consider a closed and bounded set $V \subset \R^p$ and a vector $a \in \R^p$. Let $B$ be any enclosing ellipsoid of $V$ that is centered at the origin, and we abuse the same symbol for the symmetric positive definite matrix that defines the ellipsoid, i.e., $B = \{v \in \R^p : v^\top B^{-1}v \le 1\}$. Suppose there exists $u \in V$ with $|a^\top u| \ge \kappa$ and define $B_+$ as the minimum volume enclosing ellipsoid of $\{v \in B : |a^\top v| \le \gamma \}$. If $\gamma/\kappa \le 1/\sqrt{p}$, we have
\begin{align}
\label{eq:ell}
    \frac{\text{vol} (B_+)}{\text{vol} (B)} \le \sqrt{p} \frac{\gamma}{\kappa} \Big(\frac{p}{p-1}\Big)^{(p-1)/2}\Big(1-\frac{\gamma^2}{\kappa^2}\Big)^{(p-1)/2}
\end{align}
Further, if $\gamma/\kappa \leq \tfrac{1}{3\sqrt{p}}$, the RHS of eq.~\ref{eq:ell} is less than $0.6$.
\end{lemma}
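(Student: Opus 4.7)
My plan is to reduce the statement to the canonical problem of computing the minimum-volume enclosing ellipsoid (MVEE) of the intersection of the Euclidean unit ball with a symmetric slab, for which there is a well-known closed form that matches the right-hand side of \eqref{eq:ell}. First I would normalize by the linear change of coordinates $\tilde v = B^{-1/2} v$: this maps $B$ to the unit ball $\B{B}^p$, transforms $a$ into $\tilde a := B^{1/2} a$, preserves the ratio $\mathrm{vol}(B_+)/\mathrm{vol}(B)$ (both old and new volumes differ by the same Jacobian factor), and sends the slab to $\{\tilde v : |\tilde a^\top \tilde v| \le \gamma\}$. Rotating so that $\tilde a$ is aligned with $e_1$, the feasible set becomes the axis-aligned intersection $\B{B}^p \cap \{v : |v_1| \le \beta\}$, where $\beta := \gamma/\|\tilde a\|_2$. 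The hypothesized $u \in V \subseteq B$ with $|a^\top u| \ge \kappa$ transforms into $\tilde u \in \B{B}^p$ with $|\tilde a^\top \tilde u| \ge \kappa$, and Cauchy--Schwarz then forces $\|\tilde a\|_2 \ge \kappa$, so $\beta \le \gamma/\kappa \le 1/\sqrt p$.

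The core step is to determine the MVEE of $\B{B}^p \cap \{|v_1| \le \beta\}$. Because this set is invariant under sign flips on each coordinate and under arbitrary rotations within coordinates $2,\ldots,p$, and because the MVEE is unique, it must itself be a diagonal ellipsoid centered at the origin with one axis length $r_1$ along $e_1$ and a common axis length $r_2$ in the remaining directions. Containment of the extreme ``equator-like'' points $v_1 = t \in [-\beta, \beta]$ with $\sum_{i \ge 2} v_i^2 = 1 - t^2$ reduces to a single constraint $\beta^2/r_1^2 + (1-\beta^2)/r_2^2 \le 1$, provided $r_1 \le r_2$. Minimizing the volume proxy $r_1 r_2^{p-1}$ under this constraint by Lagrange multipliers yields $r_1^2 = p\beta^2$ and $r_2^2 = p(1-\beta^2)/(p-1)$; the self-consistency check $r_1 \le r_2$ simplifies to $p\beta^2 \le 1$, which is exactly the assumed regime.

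To conclude the main bound, substitute the resulting volume $r_1 r_2^{p-1} = \sqrt p\,\beta\,\bigl(p(1-\beta^2)/(p-1)\bigr)^{(p-1)/2}$ (measured relative to $\mathrm{vol}(\B{B}^p)$, which cancels in the ratio), and upper-bound $\beta$ by $\gamma/\kappa$. This final substitution is legitimate because $f(\beta) := \beta(1-\beta^2)^{(p-1)/2}$ is increasing on $[0, 1/\sqrt p]$ (its derivative has the sign of $1 - p\beta^2$), producing precisely the RHS of \eqref{eq:ell}. For the numerical claim with $\gamma/\kappa \le 1/(3\sqrt p)$, the bound simplifies to $\tfrac{1}{3}\bigl(1 + \tfrac{8/9}{p-1}\bigr)^{(p-1)/2}$, which is increasing in $p$ and tends to $e^{4/9}/3 < 0.55$; together with direct checks at $p = 1, 2$, this yields the stated $0.6$ threshold.

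The main obstacle I foresee is the MVEE derivation itself: pinning down the diagonal form of the optimum requires combining the uniqueness of the MVEE with a careful symmetry argument, and one must verify that the identified constraint is the truly binding one (rather than, say, the one at $v_1 = 0$, which only enforces $r_2 \ge 1$ and is redundant in the relevant regime). Once the closed form is in hand, the remainder is an elementary monotonicity argument and a direct numerical estimate.
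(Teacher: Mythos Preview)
The paper does not supply its own proof of this lemma; it is quoted verbatim as a result of \citet{jiang2017contextual} and used as a black box in the iteration-complexity argument. Your proposal is correct and in fact reconstructs the standard proof from that reference: the affine normalization $B^{-1/2}$ to reduce to the unit ball, the Cauchy--Schwarz step to bound the half-width $\beta \le \gamma/\kappa$, the symmetry/uniqueness argument to pin down the diagonal form of the MVEE, the two-parameter Lagrangian yielding $r_1^2 = p\beta^2$ and $r_2^2 = p(1-\beta^2)/(p-1)$, and the monotonicity of $\beta \mapsto \beta(1-\beta^2)^{(p-1)/2}$ on $[0,1/\sqrt p]$. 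The numerical verification via $\tfrac{1}{3}\bigl(1+\tfrac{8/9}{p-1}\bigr)^{(p-1)/2} \uparrow e^{4/9}/3 < 0.6$ is also correct. There is nothing to compare against in the present paper beyond noting that your derivation matches the cited source.
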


In the following lemma, we now show that the exploration step can happen only a finite number of times:

\begin{lemma}[Bounding the number of iterations]
\label{lem:iteration-complexity}
If the estimates $\widehat{Z}_{t}$ and $\hat{y}_t$ in eq.~\ref{algeq:matrix_est} and \ref{algeq:true_exp} satisfy:
\begin{align}
    \Big|\hat{y}_t - \sum_{h=1}^H\E_{d^{W_t}_{M^*,h}}\big[\E_{M^*} \big[r_h + V_{t,h+1}(s_{h+1}) | s_h,a_h\big]\Big| + \Big|\langle W, \widehat{Z}_{t} \rangle - \langle W, Z_{t} \rangle\Big| \leq \frac{\epsilon}{12\sqrt{dK}}
    \label{eq:dotp-conc-linear}
\end{align}
for all $W \in \C{W}$, for all iterations in Algorithm~\ref{alg:explore}, then $W^*$ is never eliminated. Moreover, the number of exploration iterations of Algorithm~\ref{alg:explore} is at most $T = dK \log \frac{2d\sqrt{K}H}{\epsilon}/\log \frac{5}{3}$.
\end{lemma}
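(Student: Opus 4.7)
The plan is to establish the two assertions separately. For the preservation of $W^*$ I would instantiate Lemma~\ref{lem:error_decomp} at $W=W^*$ and combine it with the concentration hypothesis; for the iteration bound I would adapt the volumetric ellipsoid argument of \citet{jiang2017contextual}, showing that each exploration iteration shrinks the minimum-volume enclosing ellipsoid of the (shifted) version space by a constant factor via Lemma~\ref{lem:ell}.

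For preservation, I would expand $\langle W^*, Z_t\rangle$ using the linearity of $M(W^*)$ to rewrite it as $\sum_{h=1}^H \E_{d^{W_t}_{M^*,h}}[\E_{M(W^*)}[r_h + V_{t,h+1}(s_{h+1})|s_h,a_h]]$, and denote the corresponding $M^*$-expectation by $\mu_t$. The definition of the approximation error $\theta$ in eq.~\ref{eq:app_error} forces the per-step gap between these two to be at most $\theta$, so $|\mu_t - \langle W^*, Z_t\rangle| \le H\theta$. Applying the concentration hypothesis (eq.~\ref{eq:dotp-conc-linear}) at $W=W^*$ and using the triangle inequality then yields $|\hat y_t - \langle W^*, \widehat Z_t\rangle| \le \epsilon/(12\sqrt{dK}) + H\theta$, which is exactly the condition for $W^* \in \C{W}_t$. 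An induction on $t$ propagates this through all iterations.

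For the iteration bound, at any non-terminating iteration Lemma~\ref{lem:explore-exploit} gives $\C{E}(W_t) \ge \epsilon/2 + (3\sqrt{dK}+1)H\theta$, which combined with Lemma~\ref{lem:error_decomp} and subtracting the concentration slack yields $|\langle W_t - W^*, \widehat Z_t\rangle| \ge \epsilon/2 + 3\sqrt{dK}H\theta - \epsilon/(6\sqrt{dK}) =: \kappa$. Meanwhile, the preservation claim implies that after shifting coordinates to $v = W - W^*$ the updated version space lies inside the slab $\{v: |\langle v, \widehat Z_t\rangle| \le 2\gamma\}$ with $\gamma = \epsilon/(12\sqrt{dK}) + H\theta$. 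The constants are calibrated so that $2\gamma/\kappa \le 1/(3\sqrt{dK})$, which matches the hypothesis of Lemma~\ref{lem:ell} with ambient dimension $p = dK$; hence each exploration iteration reduces the MVEE volume by at least a factor of $3/5$. A volume comparison closes the argument: the initial MVEE has diameter $O(\sqrt{d})$ because each column of $W$ lies in $\Delta_{K-1}$, while the terminal MVEE must still contain a ball around $W^*$ of radius $\Omega(\gamma/(H\sqrt{K}))$ (using $\|\widehat Z_t\|_F \le H\sqrt{K}$), so taking the log of the ratio and dividing by $\log(5/3)$ recovers the claimed upper bound on $T$. The delicate part I expect to be the main obstacle is this constant calibration: both $\kappa$ and $\gamma$ carry an $H\theta$-scale contribution, and establishing $2\gamma/\kappa \le 1/(3\sqrt{dK})$ requires exploiting precisely the $3\sqrt{dK}$ multiplicative gap between the $H\theta$-terms that were deliberately engineered into Lemma~\ref{lem:explore-exploit} and into the version-space cut width.
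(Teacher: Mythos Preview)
Your overall plan matches the paper's: establish that $W^*$ survives every cut via the concentration hypothesis plus the $H\theta$ approximation bound, and then run the ellipsoid volume argument of Lemma~\ref{lem:ell} in the shifted coordinates $v=W-W^*$. The preservation step is essentially identical to the paper's.

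The substantive discrepancy is the choice of cut direction. You apply Lemma~\ref{lem:ell} with the \emph{empirical} matrix $\widehat{Z}_t$, whereas the paper uses the \emph{population} matrix $Z_t$. This choice breaks your calibration claim. With your stated values
\[
\kappa \;=\; \tfrac{\epsilon}{2} + 3\sqrt{dK}\,H\theta - \tfrac{\epsilon}{6\sqrt{dK}},
\qquad
2\gamma \;=\; \tfrac{\epsilon}{6\sqrt{dK}} + 2H\theta,
\]
one computes $3\sqrt{dK}\cdot(2\gamma) = \tfrac{\epsilon}{2} + 6\sqrt{dK}\,H\theta$, which strictly exceeds $\kappa$ for every $\theta\ge 0$; even at $\theta=0$ the $-\epsilon/(6\sqrt{dK})$ you subtracted from $\kappa$ already forces $2\gamma/\kappa = 1/(3\sqrt{dK}-1) > 1/(3\sqrt{dK})$. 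So the inequality $2\gamma/\kappa \le 1/(3\sqrt{dK})$ that you need for the $3/5$ contraction simply does not hold, and the ``engineered $3\sqrt{dK}$ gap'' you allude to is not enough to absorb both the extra $H\theta$ in the slab width and the $\epsilon/(6\sqrt{dK})$ loss in $\kappa$.

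The paper sidesteps this by keeping $Z_t$ as the direction throughout. Lemma~\ref{lem:error_decomp} then gives $\langle W_t-W^*,Z_t\rangle \ge \tfrac{\epsilon}{2}+3\sqrt{dK}\,H\theta$ with no concentration slack subtracted, and for $W\in\C{W}_t$ one bounds $|\langle W-W^*,Z_t\rangle|$ by routing through $\hat{y}_t$ and $\langle W,\widehat{Z}_t\rangle$ (so the version-space constraint and a single application of the concentration hypothesis are used), obtaining the slab half-width $\tfrac{\epsilon}{6\sqrt{dK}}+H\theta$; the ratio is then exactly $1/(3\sqrt{dK})$ and the $0.6$ contraction applies. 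Once you make that switch, your terminal-volume lower bound via a Cauchy--Schwarz ball of radius $\Omega(\gamma/\|Z_t\|_F)$ with $\|Z_t\|_F \le H\sqrt{K}$ is a perfectly valid alternative to the paper's simulation-lemma based radius $\epsilon/(2\sqrt{dK}H)$, and leads to the same $T$ up to constants inside the logarithm.
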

\begin{proof}
By definition, $W^* \in \C{W}_0$. We first show that $W^*$ is never eliminated from the version space $\C{W}_t$. Let $\alpha_t \coloneqq \sum_{h=1}^H\E_{d^{W_t}_{M^*,h}}\big[\E_{M^*} \big[r_h + V_{t,h+1}(s_{h+1}) | s_h,a_h\big]$ and by definition, we have $\langle W^*, Z_{t} \rangle \coloneqq \sum_{h=1}^H\E_{d^{W_t}_{M^*,h}}\big[\E_{M(W^*)} \big[r_h + V_{t,h+1}(s_{h+1}) | s_h,a_h\big]$. Then, we have 
\begin{align*}
\big|\hat{y}_t - \langle W^*, \widehat{Z}_{t} \rangle\big| \leq {} & \Big|\hat{y}_t - \alpha_t + \alpha_t - \langle W^*, Z_{t} \rangle + \langle W^*, Z_{t} \rangle - \langle W^*, \widehat{Z}_{t} \rangle\Big| \\
\leq {} & \frac{\epsilon}{12\sqrt{dK}} + \Big|\alpha_t -  \langle W^*, Z_{t} \rangle\Big| \\
\leq {} & \frac{\epsilon}{12\sqrt{dK}} + H\theta
\end{align*}
Therefore, $W^*$ always satisfies the update equation~\ref{eq:linear_cut} and is never eliminated.

Now, we argue that the \emph{volume} of the version space decreases at an exponential rate with each exploration iteration. To set up the volume reduction analysis, we first notice that:
\begin{align*}
    \|W-W^*\|_F \leq {} & \sqrt{\sum_{i=1}^d \big\|W^i - W^{*i}\big\|^2_2} \\
    \leq {} & \sqrt{2d}
\end{align*}
Therefore, the initial volume of a ball covering the space of flattened vectors in $\C{W}_0$ is at most $c_{dK}(\sqrt{2d})^{dK}$. We will now use Lemma~\ref{lem:ell} by considering the flattened versions of the parameter matrices $W$ in the dimension $p=dK$\footnote{For avoiding ambiguity, we will directly use the matrix notations for inner products and norms.}. Firstly, in each iteration until termination, we find a matrix $W_t$ such that $\C{E}(W_t) \ge \frac{\epsilon}{2} + (3\sqrt{dK} + 1)H\theta$. From Lemma~\ref{lem:error_decomp}, we have:
\begin{align*}
\langle W_t-W^*,Z_{t} \rangle \ge {} & \C{E}(W_t) - H\theta\\
\ge {} & \frac{\epsilon}{2} + 3\sqrt{dK}H\theta
\end{align*}
We will apply Lemma~\ref{lem:ell} with $W_t - W^*$ as the vector $u$ and $Z_t$ as the direction vector $a$. For the updated version space, $\C{W}_t$, we have: 
\begin{align*}
|\langle W-W^*,Z_{t}\rangle| = {} & |y_t - \hat{y}_t + \langle W,Z_t - \widehat{Z}_t\rangle + \widehat{Z}_t - \hat{y}_t|\\
\leq {} & \frac{\epsilon}{12\sqrt{dK}} + H\theta + \frac{\epsilon}{12\sqrt{dK}}\\
\leq {} & \frac{\epsilon}{6\sqrt{dK}} + H\theta
\end{align*}
Denoting $B_{t-1}$ as the MVEE of the version space $\C{W}_{t-1}$, we consider the MVEE $B'_t$ of the set of vectors $\C{W}'_{t} \equiv \{W \in B_{t-1}:\, |\langle W-W^*,Z_{t}\rangle| \le \epsilon/6\sqrt{dK} + H\theta \}$. Clearly, we have $\C{W}_{t-1} \subseteq B_{t-1}$, and hence, $\C{W}_{t} \subseteq \C{W}'_t$. By setting $\kappa = \tfrac{\epsilon}{2}$ and $\gamma = \tfrac{\epsilon}{3\sqrt{dK}}$ in Lemma~\ref{lem:ell}, we have: 
\begin{align*}
    \frac{\text{vol}(B_t)}{\text{vol}(B_{t-1})} \leq \frac{\text{vol}(B'_t)}{\text{vol}(B_{t-1})} \leq {} & 0.6
\end{align*}
This shows that the volume of the MVEE of the version spaces $\C{W}_t$ decreases with at least a constant rate. We now argue that the procedure stops after reaching a version space with sufficiently small volume.

For any $W \in \C{W}_t$ and $(s,a) \in \Scal \times \Acal$ we have:
\begin{align*}
    \Big\|P^W(\cdot|s,a) - P^{W^*}(\cdot|s,a)\Big\|_1 \leq {} & \Big\|\sum_{k=1}^K [(W-W^*) \phi(s,a)]_k P^k(\cdot|s,a)\Big\|_1 \\
    \leq {} & \Big\| (W-W^*)\phi(s,a) \Big\|_1 \\ 
    \leq {} & \sqrt{K} \Big\| (W-W^*)\phi(s,a) \Big\|_2 \\ 
    \leq {} & \sqrt{dK} \big\| W-W^* \big\|_F
\end{align*}
Also, using lemma~\ref{lem:sim-lemma}, if the worst case error in next state transition estimates is bounded by $\frac{\epsilon}{2H} + \theta$, the optimistic value $V^{\pi_W}_{W^*}$ is $\epsilon + H\theta$-optimal. Therefore, we only need to identify the matrix $W$ to within $\tfrac{\epsilon}{2H\sqrt{dK}}$ distance of $W^*$. Consequently, the terminating MVEE $B_T$ satisfies:
\begin{align*}
    B_T \supseteq {} & \Big\{W: \big\|W-W^* \big\|_F \leq \frac{\epsilon}{2\sqrt{dK}H}\Big\}\\
\end{align*}
Therefore, we have $\text{vol}(B_T) \ge c_{dK}(\epsilon/2\sqrt{dK}H)^{dK} $, and :
\begin{align*}
    \frac{c_{dK} (\epsilon/2\sqrt{dK}H)^{dK}}{c_{dK} (\sqrt{2d})^{dK}} \le \frac{\text{vol}(B_T)}{\text{vol}(B_0)} \le 0.6^T
\end{align*}
By solving for $T$, we get that:
\begin{align}
    dK \log \frac{2\sqrt{2K}H}{\epsilon} \ge {} & T \log \frac{5}{3} \nonumber \\
    T \le {} & dK \log \frac{2\sqrt{2K}H}{\epsilon}/\log \frac{5}{3}
\end{align}
\end{proof}

We now derive the number of trajectories required in each step to satisfy the validity requirements in Lemma~\ref{lem:explore-exploit} and \ref{lem:iteration-complexity}:
\begin{lemma}[Concentration for MC estimate $\hat{v}_t$]
\label{lem:model-misfit-conc}
For any $W_t \in \C{W}$ with probability at least $1-\delta_1$, we have:
\begin{align}
\label{eq:model-misfit-conc1}
    \Big|\hat{v}_t - v^{W_t}_{M^*}\Big| \leq \tfrac{\epsilon}{4}
\end{align}
if we set $n_{\text{eval}} \geq \tfrac{8}{\epsilon^2} \log \tfrac{2}{\delta_1}$.
\end{lemma}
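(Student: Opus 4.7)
The plan is to recognize this as a straightforward application of Hoeffding's inequality to the Monte Carlo estimator $\hat{v}_t$. The key observation is that $\hat{v}_t$ is defined in Equation~\eqref{algeq:misfit-error} as the average over $n_{\text{eval}}$ independently sampled trajectories, where the $i$-th trajectory is collected by running the policy $\pi_t = \pi_{W_t}$ in the true environment $M^*$. The per-trajectory return $G^{(i)} := \sum_{h=1}^H r_h^{(i)}$ therefore is an i.i.d.\ sample whose expectation is exactly $v^{W_t}_{M^*}$ by definition of the value function in $M^*$.

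Next, I would invoke the paper's standing assumption (stated just after the notation section) that the total reward $\sum_{h=1}^H r_h$ lies in $[0,1]$ almost surely in all MDPs of interest and under all policies. This means each $G^{(i)}$ is an i.i.d.\ random variable bounded in $[0,1]$ with mean $v^{W_t}_{M^*}$, so Hoeffding's inequality applies and yields
\begin{equation*}
\Pr\!\Big(\big|\hat{v}_t - v^{W_t}_{M^*}\big| \geq \tfrac{\epsilon}{4}\Big) \leq 2\exp\!\Big(-\tfrac{n_{\text{eval}}\,\epsilon^2}{8}\Big).
\end{equation*}

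Finally, I would set the right-hand side to be at most $\delta_1$ and solve for $n_{\text{eval}}$, which gives the stated threshold $n_{\text{eval}} \geq \tfrac{8}{\epsilon^2}\log\tfrac{2}{\delta_1}$. There is no real obstacle here: the only subtlety worth noting is that the trajectories used in this Monte Carlo estimation must be independent of (and in particular \emph{not} reused from) the trajectories collected in Line~\ref{line:data} for building $\widehat{Z}_t$ and $\hat{y}_t$, since the latter are used to determine which $W_t$ is chosen. This is implicit in Algorithm~\ref{alg:explore}, where the $n_{\text{eval}}$ evaluation rollouts are drawn fresh after $W_t$ and $\pi_t$ are fixed, so the i.i.d.\ assumption underlying Hoeffding's inequality is met conditionally on $\pi_t$.
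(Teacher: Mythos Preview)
Your proposal is correct and follows exactly the same approach as the paper: observe that $\hat v_t$ is an unbiased Monte Carlo estimate of $v^{W_t}_{M^*}$ with per-trajectory returns bounded in $[0,1]$ by the standing assumption, then apply Hoeffding's inequality. Your write-up is in fact more detailed than the paper's three-line proof; the only minor inaccuracy is in your independence remark---$W_t$ is determined by $\C{W}_{t-1}$ (i.e., data from previous iterations), not by the current iteration's Line~\ref{line:data} data, but your essential point that the evaluation rollouts are drawn fresh after $\pi_t$ is fixed is correct.
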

\begin{proof}
Note that $\hat{v}_t$ is an unbiased estimate of $v^{W_t}_{M^*}$. From our assumption on the expected sum of rewards, the return of each trajectory is bounded by $1$ for all policies. Thus, the range of each summand for the estimate $\hat{v}_t$ is $[0,1]$. Then, the result follows from standard application of Hoeffding's inequality.
\end{proof}

\begin{lemma}[Concentration of the model misfit error]
\label{lem:dotp-conc-linear}
If $n \ge \frac{1800d^2KH^2}{\epsilon^2} \log \frac{4dK}{\delta_2}$ in Algorithm.~\ref{alg:explore}, then for a given $t$, each $W \in \C{W}$ and with probability at least $1-\delta_2$, we have:
\begin{align}
    \Big|\hat{y}_t - \sum_{h=1}^H\E_{d^{W_t}_{M^*,h}}\big[\E_{M^*} \big[r_h + V_{t,h+1}(s_{h+1}) | s_h,a_h\big]\Big| + \Big|\langle W, Z_{t} - \widehat{Z}_t \rangle\Big| \leq \frac{\epsilon}{12\sqrt{dK}}
    \label{eq:dotp-conc-linear1}
\end{align}
%As a result, with probability at least $1-\delta_2$, for any given $t$ and for all $W \in \C{W}_t$, we have:
%\begin{align}
%    \langle W-W^*, Z_{t} \rangle \le {} & \frac{\epsilon}{6\sqrt{dK}}
%\end{align}
\end{lemma}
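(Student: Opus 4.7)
The plan is to split the desired inequality into two concentration events and control each separately: $(a)$ the scalar event $|\hat y_t - \alpha_t| \le \tfrac{\epsilon}{24\sqrt{dK}}$, where $\alpha_t := \sum_{h=1}^H \E_{d^{W_t}_{M^*,h}}[\E_{M^*}[r_h + V_{t,h+1}(s_{h+1})\mid s_h,a_h]]$; and $(b)$ the uniform matrix event $\sup_{W \in \C{W}} |\langle W, Z_t - \widehat Z_t\rangle| \le \tfrac{\epsilon}{24\sqrt{dK}}$. Their conjunction, obtained by a final union bound with budget $\delta_2/2$ each, immediately delivers the lemma.

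For $(a)$, the per-trajectory sums $Y^{(i)} := \sum_{h=1}^{H} ( r_{h+1}^{(i)} + V_{t,h+1}(s_{h+1}^{(i)}) )$ are i.i.d., their common mean equals $\alpha_t$ by a straightforward tower-property argument, and they lie in an interval of width $\C{O}(H)$ because the total reward along any trajectory is bounded by $1$ and $V_{t,h+1}\in[0,1]$. A direct application of Hoeffding's inequality then produces the bound once $n$ is at least of order $H^2 dK \log(1/\delta_2)/\epsilon^2$, which is dominated by the requirement for $(b)$ below.

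For $(b)$, I would avoid union-bounding over the continuous parameter set $\C{W}$ and instead control the matrix deviation $\widehat Z_t - Z_t$ entrywise. The $(k,j)$-th entry is an average of $n$ i.i.d.\ per-trajectory contributions $X^{(i)}_{k,j} = \sum_h [\overline V_{t,h}(s^{(i)}_h, a^{(i)}_h)]_k\,\phi_j(s^{(i)}_h, a^{(i)}_h)$, each bounded by $\C{O}(H)$ since $[\overline V_{t,h}(s,a)]_k \le 2$ and $\phi_j \in [0,1]$. Applying Hoeffding per entry with failure probability $\delta_2/(2dK)$ and union-bounding over the $dK$ entries yields $\|\widehat Z_t - Z_t\|_{\infty} \le c H \sqrt{\log(4dK/\delta_2)/n}$ for an absolute constant $c$. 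Because every column of $W \in \C{W}_0$ is a probability vector, we have $\sum_{k,j} |W_{k,j}| = d$, so H\"older's inequality produces the uniform bound $|\langle W, \widehat Z_t - Z_t\rangle| \le d\cdot\|\widehat Z_t - Z_t\|_\infty$ simultaneously for every $W \in \C{W}_0$.

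The step I expect to be most delicate is the choice of H\"older pair: pairing $\|W\|_F \le \sqrt d$ with $\|\widehat Z_t - Z_t\|_F$ loses a factor of $\sqrt{dK}$ relative to the entrywise $\ell_1/\ell_\infty$ pairing above, so committing to the latter and exploiting the simplex structure of each column of $W$ is what keeps the polynomial dependence on $d$ and $K$ tight. Equating both error sources to $\tfrac{\epsilon}{24\sqrt{dK}}$ and solving for $n$ then gives the stated bound up to constants, after which the final union bound over events $(a)$ and $(b)$ completes the proof.
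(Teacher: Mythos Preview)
Your overall decomposition into events $(a)$ and $(b)$ is sound, and part $(a)$ matches the paper. The gap is in part $(b)$: the H\"older pairing you chose is not tight enough to recover the stated $n = \Theta(d^2KH^2/\epsilon^2)$ requirement. Concretely, your bound reads
\[
|\langle W,\widehat Z_t - Z_t\rangle| \;\le\; \|W\|_1 \cdot \|\widehat Z_t - Z_t\|_\infty \;=\; d\cdot \C{O}\!\Big(H\sqrt{\tfrac{\log(dK/\delta_2)}{n}}\Big),
\]
and forcing this below $\epsilon/(24\sqrt{dK})$ yields $n = \Theta(d^3KH^2\log(dK/\delta_2)/\epsilon^2)$, a full factor of $d$ larger than the lemma allows. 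So the claim ``solving for $n$ then gives the stated bound up to constants'' is false as written.

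The paper sharpens this in two linked ways. First, it uses the \emph{group-norm} H\"older pairing $|\langle W,A\rangle|\le \|W\|_{1,\infty}\|A\|_{\infty,1}$; since every column of $W$ is a probability vector, $\|W\|_{1,\infty}=1$ (not $d$), and the burden shifts entirely to controlling $\|\widehat Z_t - Z_t\|_{\infty,1}=\sum_{j=1}^d\max_k|(\widehat Z_t-Z_t)_{kj}|$. Second, to bound this sum over columns, the paper applies Bernstein (not Hoeffding) entrywise and then exploits the simplex structure of the \emph{features} $\phi(s,a)\in\Delta_{d-1}$: since $[\overline V_{t,h}]_k$ is bounded, the per-entry variance is at most $\E[(\sum_h \phi_j(s_h,a_h))^2]$, and Cauchy--Schwarz together with $\sum_j\phi_j=1$ gives
\[
\sum_{j=1}^d \sqrt{\E\big[(\textstyle\sum_h \phi_j)^2\big]}\;\le\;\sqrt{d\,\E\big[(\textstyle\sum_{j}\sum_h \phi_j)^2\big]}\;=\;H\sqrt{d}.
\]
This produces $\|\widehat Z_t - Z_t\|_{\infty,1}=\C{O}(H\sqrt{d\log(dK/\delta_2)/n})$, which after the H\"older step yields the required $d^2$ dependence in $n$. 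In short, you exploited the simplex structure of the columns of $W$, but the missing $\sqrt d$ comes from also exploiting the simplex structure of $\phi$ through a variance-aware (Bernstein) bound; Hoeffding entrywise cannot see this.
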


\begin{proof}
%We firstly decompose the estimation error as follows:
%\begin{align*}
%    |y_t - \langle W, \widehat{Z}_{t} \rangle - \langle W^* - W, Z_{t} \rangle| \leq {} & |y_t - \langle W^*, Z_{t} \rangle | + | \langle W, \widehat{Z}_{t} \rangle - \langle W, Z_{t} \rangle|  & \\
%    \leq {} & |y_t - \langle W^*, Z_{t} \rangle | + \|W\|_{1,\infty} \|\widehat{Z}_{t} - Z_{t}\|_{\infty,1} &\\
%    = {} & |y_t - \langle W^*, Z_{t} \rangle | + \|\widehat{Z}_{t} - Z_{t}\|_{\infty,1}
%\end{align*}

To bound the first term, we note that $\hat{y}_t$ is an unbiased estimate of $\sum_{h=1}^H\E_{d^{W_t}_{M^*,h}}\big[\E_{M^*} \big[r_h + V_{t,h+1}(s_{h+1}) | s_h,a_h\big]$ and is bounded between $[0,H]$. Applying Hoeffding's inequality on the estimand $\hat{y}_t$ when using $n$ trajectories, with probability at least $1-\delta'$, we get:
\begin{align*}
    \Big|\hat{y}_t - \sum_{h=1}^H\E_{d^{W_t}_{M^*,h}}\big[\E_{M^*} \big[r_h + V_{t,h+1}(s_{h+1}) | s_h,a_h\big]\Big| \leq H\sqrt{\frac{1}{2n}\log \frac{2}{\delta'}}
\end{align*}

For bounding the second term, using Holder's inequality with matrix group norm \citep{Agarwal:EECS-2008-138}, we first see: 
\begin{align*}
| \langle W, \widehat{Z}_{t} \rangle - \langle W, Z_{t} \rangle| \leq {} & \|W\|_{1,\infty} \|\widehat{Z}_{t} - Z_{t}\|_{\infty,1}\\
\leq {} & \|\widehat{Z}_{t} - Z_{t}\|_{\infty,1}
\end{align*}
We will now bound the estimation error for $\widehat{Z}_t$:
\begin{align*}
    \big\|\widehat{Z}_{t} - Z_{t}\big\|_{\infty,1} = {} & \Big\|\frac{1}{n}\sum_{i=1}^n \sum_{h=1}^H \overline{V}_{t,h}(s^{(i)}_{h},a^{(i)}_{h})\, \phi(s^{(i)}_{h},a^{(i)}_{h})^\top - \E\big[\sum_{h=1}^H\overline{V}_{t,h}(s_{h},a_{h})\, \phi(s_{h},a_{h})^\top\big]\Big\|_{\infty,1} & \\
    = {} & \sum_{j=1}^d \Big\|\frac{1}{n}\sum_{i=1}^n \sum_{h=1}^H \overline{V}_{t,h}(s^{(i)}_{h},a^{(i)}_{h}) \phi(s^{(i)}_{h},a^{(i)}_{h})[j]  - \E\big[\sum_{h=1}^H \overline{V}_{t,h}(s_{h},a_{h})\, \phi(s_{h},a_{h})[j]\big]\Big\|_{\infty}&
\end{align*}
We can consider each trajectory as a random sample from the distribution of trajectories induced by $\pi_t$. Therefore, by definition, each summand in the estimate of $\widehat{Z}_t$ over $n$ trajectories is an unbiased estimate of $Z_t$. Moreover, we know that each term in the entry $\widehat{Z}_{t}[i,j]$ of the matrix $\widehat{Z}_{t}$ is bounded by $H$. Using Bernstein's inequality for each term in the error matrix, and a union bound over all entries, with probability at least $1-\delta'$, for all $i,j$ we have:
\begin{align*}
    \big|\widehat{Z}_{t}[i,j] - Z_{t}[i,j]\big| \leq {} & \sqrt{\frac{2\text{Var}\big[\,\sum_{h=1}^H \overline{V}_{t,h}[i]\phi(s_h,a_h)[j]\,\big]\log \tfrac{2dK}{\delta'}}{n}}  + \frac{2H\log \tfrac{2dK}{\delta'}}{n}&
\end{align*}

Summing up the maximum elements $i_j$ of each column, we have:
\begin{align*}
    \|\widehat{Z}_{t} - Z_{t}\|_{\infty,1} \leq {} & \sum_{j=1}^d \sqrt{\frac{2\text{Var}\big[\,\sum_{h=1}^H \overline{V}_{t,h}[i_j]\phi(s_h,a_h)[j]\,\big]\log \tfrac{2dK}{\delta'}}{n}} + \frac{2dH\log \tfrac{2dK}{\delta'}}{n}\\
    = {} & \sum_{j=1}^d  \sqrt{\frac{2\E\big[\,\big(\sum_{h=1}^H \phi(s_h,a_h)[j]\big)^2\,\big]\log \tfrac{2dK}{\delta'}}{n}} + \frac{2dH\log \tfrac{2dK}{\delta'}}{n} & \\
    \leq {} & \sqrt{\frac{2d\E\big[\,\sum_{j=1}^d  \big(\sum_{h=1}^H \phi(s_h,a_h)[j]\big)^2\,\big]\log \tfrac{2dK}{\delta'}}{n}} + \frac{2dH\log \tfrac{2dK}{\delta'}}{n} & \\
    \leq {} & \sqrt{\frac{2d\E\big[\, \big(\sum_{j=1}^d \sum_{h=1}^H \phi(s_h,a_h)[j]\big)^2\,\big]\log \tfrac{2dK}{\delta'}}{n}} + \frac{2dH\log \tfrac{2dK}{\delta'}}{n} & \\
    \leq {} & H\sqrt{\frac{2d\log \tfrac{2dK}{\delta'}}{n}} + \frac{2dH\log \tfrac{2dK}{\delta'}}{n} &
\end{align*}
Here, for the first step, we have used the property that $\overline{V}_{t,h}[i_j] \leq 1$, the fact that variance is bounded by the second moment. The next step can be obtained by using Cauchy-Schwartz inequality. The last second step uses that property that for non-negative $a_j$, $\sum_{j} a_j^2 \leq (\sum_j a_j)^2$. Now, if $\frac{2d\log \frac{2}{\delta'}}{n} \leq 1$, the above is bounded by $2\sqrt{\frac{2d\log \frac{2}{\delta'}}{n}}$. 

Therefore, summing up the two terms with $\delta'=\delta_2/2$, with probability at least $1-\delta_2$ and for all $W \in \C{W}$, we have:
\begin{align*}
   \Big|\hat{y}_t - \sum_{h=1}^H\E_{d^{W_t}_{M^*,h}}\big[\E_{M^*} \big[r_h + V_{t,h+1}(s_{h+1}) | s_h,a_h\big]\Big| + \Big|\langle W, Z_{t} - \widehat{Z}_t \rangle\Big| \leq {} & \sqrt{\frac{8dH^2\log \frac{4dK}{\delta_2}}{n}} + \sqrt{\frac{H^2\log \frac{4}{\delta_2}}{2n}}
\end{align*}
With some algebra, it can be verified that setting $n = \frac{1800d^2KH^2}{\epsilon^2} \log \frac{4dK}{\delta_2}$ makes the total error bounded by $\frac{\epsilon}{12\sqrt{dK}}$ with failure probability $\delta_2$.
\end{proof}

\subsection{Proof of Theorem~\ref{thm:mainbound}}
\label{sec:mainproof}
\begin{proof}
With the key lemmas in Section~\ref{sec:keylem}, we can now prove the main result. For the main theorem, we need to ensure that the requirements in lemma~\ref{lem:explore-exploit} and \ref{lem:iteration-complexity} are satisfied. Since, our method maintains a version space of plausible weights, the validity of each iteration depends on every previous iteration being valid. Therefore, for a total failure probability of $\delta$ for the algorithm, we assume:

\begin{enumerate}[label=(\roman*)]
    \item Estimation of $\hat{\C{E}}(W_t)$ for all iterations: total failure probability $\delta/2$
    \item Updating the version space $\C{W}_t$: total failure probability $\delta/2$
\end{enumerate}
We set $\delta_1 = \delta/2T$ and $\delta_2 = \delta/2T$ in Lemmas~\ref{lem:model-misfit-conc} and \ref{lem:dotp-conc-linear} respectively with $T = dK \log \frac{2\sqrt{2K}H}{\epsilon} / \log \frac{5}{3}$. By taking a union bound over maximum number of iterations, the total failure probability is bounded by $\delta$. Thus, the total number of trajectories unrolled by the algorithm is:
\begin{align*}
    T (n_\text{eval} + n) \leq {} & \Big( dK \log \frac{2\sqrt{2K}H}{\epsilon} / \log \frac{5}{3} \Big) \Big( \frac{8}{\epsilon^2} \log \frac{4T}{\delta} + \frac{1800d^2KH^2}{\epsilon^2} \log \frac{8dKT}{\delta} \Big) \\
    = {} & \tilde{\C{O}}\Big(\frac{d^3K^2H^2}{\epsilon^2} \log \frac{dKH}{\delta}\Big)
\end{align*}
Therefore, by the termination guarantee in Lemma~\ref{lem:explore-exploit}, we arrive at the desired upper bound on the number of trajectories required to guarantee a policy with value $v^\pi_{M^*} \ge v^* - (3\sqrt{dK}+1)H\theta$.
\end{proof}

%\begin{lemma}[Simulation Lemma \citep{kearns2002near, modi2018markov}]
%\label{lem:sim-lemma}
%Let $M(W)$ and $M^*$ be two MDPs with the same state-action space. If the transition dynamics and reward functions of the two MDPs are such that:
%\begin{align*}
%    \|{P}^W(\cdot|s,a) - {P}^*(\cdot|s,a)\|_1 \le {} & \epsilon/2H \qquad \forall s\ \in \Scal, a \in \Acal\\
%    |{R}^W(\cdot|s,a) - {R}^*(\cdot|s,a)| \le {} & \epsilon/2 \qquad \forall s\ \in \Scal, a \in \Acal
%\end{align*}
%then, for every policy $\pi$, we have:
%\begin{align}
%    |V^{\pi}_W - V^{\pi}_{W^*}| \le \epsilon
%\end{align}
%\end{lemma}

\section{The Implication of  Proposition~\ref{prop:unstructured_feature_selection} on the Hardness of Learning State Abstractions} \label{app:abstraction}
Here we show that the proof of Proposition~\ref{prop:unstructured_feature_selection} can be adapted to show a related hardness result for learning with state abstractions. A state abstraction is a mapping $\phi$ that maps the raw state space $\Scal$ to some finite abstract state space $\Scal_\phi$, typically much smaller in size. When an abstraction $\phi$ with good properties (e.g., preserving reward and transition dynamics) is known, one can leverage it in exploration and obtain a sample complexity that is polynomial in $|\Scal_\phi|$ instead of $|\Scal|$. Among different types of abstractions, \emph{bisimulation} \citep{whitt1978approximations, givan2003equivalence} is a very strict notion that comes with many nice properties \citep{li2006towards}. 

An open problem in state abstraction literature has been whether it is possible to perform model selection over a large set of candidate abstractions, i.e., designing an algorithm whose sample complexity only scales sublinearly (or ideally, logarithmically) with the cardinality of the candidate set. Using the construction from Proposition~\ref{prop:unstructured_feature_selection}, we show that this is impossible without further assumptions:
\begin{proposition}
Consider a learner in an MDP that is equipped with a set of state abstractions, $\{\phi_1, \phi_2, \ldots, \phi_N\}$. Each abstraction $\phi_i$ maps the raw state space $\Scal$ to a finite abstract state space $\Scal_{\phi_i}$. Even if there exists $i^* \in [N]$ such that $\phi^* = \phi_i$ is a bisimulation, no algorithm can achieve poly$(|\Scal_{\phi^*}|, |\Acal|, H, 1/\epsilon, 1/\delta, N^{1-\alpha})$ sample complexity for any $\alpha >0$.
\end{proposition}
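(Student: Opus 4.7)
The plan is to lift the hard instance constructed in Proposition~\ref{prop:unstructured_feature_selection} into the state-abstraction setting while (a) keeping $|\Scal_{\phi^*}|$ polynomial in $H$, and (b) ensuring that the ``correct'' abstraction is in fact a bisimulation with respect to the true MDP. The underlying MDP family is the same complete binary tree of depth $H$ with deterministic transitions and $N = 2^H$ candidate MDPs, where in the $i$-th MDP only the $i$-th leaf yields $+1$ and all other leaves yield $0$; only two actions are needed, so $|\Acal| = 2$.

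The key adaptation is the design of the candidate abstraction set $\{\phi_i\}_{i=1}^N$. For each $i$, $\phi_i$ keeps the inner nodes on the root-to-leaf-$i$ path as $H$ distinct ``on-path at depth $h$'' abstract states, groups the remaining inner nodes into ``off-path'' abstract states indexed by depth, isolates leaf $i$ as its own abstract state, and lumps the remaining $2^H-1$ leaves into a single ``non-rewarding leaf'' abstract state. I would then verify, via a direct check of the bisimulation conditions of \citet{givan2003equivalence}, that when the true MDP is the one rewarding leaf $i^* \in [N]$, the abstraction $\phi_{i^*}$ is indeed a bisimulation: each equivalence class is reward-homogeneous (the rewarding-leaf singleton is the only class with reward $1$), and the projected transition distribution agrees inside every class (on-path-at-$h$ nodes deterministically transition to either on-path-at-$(h+1)$ or off-path-at-$(h+1)$ depending on the action; off-path nodes stay off-path). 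A routine count then gives $|\Scal_{\phi^*}| = O(H)$.

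With the construction in place, the lower bound follows from exactly the indistinguishability argument already invoked in Proposition~\ref{prop:unstructured_feature_selection}: identifying the rewarding leaf among $N = 2^H$ alternatives, when $\epsilon$ is below a constant, requires $\Omega(2^H)$ trajectories by the combinatorial bandit lower bound of \citet{krishnamurthy2016pac}. Since $|\Scal_{\phi^*}|, |\Acal|, H, 1/\epsilon, 1/\delta$ are all $\mathrm{poly}(H)$ while $N^{1-\alpha} = 2^{H(1-\alpha)} = o(2^H)$ for any fixed $\alpha > 0$, no $\mathrm{poly}(|\Scal_{\phi^*}|, |\Acal|, H, 1/\epsilon, 1/\delta, N^{1-\alpha})$ sample complexity bound can hold.

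The step I expect to be the main obstacle is the bisimulation verification: bisimulation is strictly stronger than the partition-style compatibility tacitly used in Proposition~\ref{prop:unstructured_feature_selection}, and one must be careful about how off-path subtree nodes at different depths are grouped so that projected transitions stay consistent within each equivalence class. Treating the MDP episodically with depth-augmented states (equivalently, indexing off-path abstract states by depth) bypasses this subtlety cleanly; the only cost is an $O(H)$ factor in $|\Scal_{\phi^*}|$, which is harmless for the lower bound.
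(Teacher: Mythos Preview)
Your proposal is correct and essentially identical to the paper's own argument: the paper also reuses the binary-tree family from Proposition~\ref{prop:unstructured_feature_selection}, defines $\phi_i$ by putting the unique on-path node at each level in its own class and lumping all off-path nodes at that level together (never aggregating across levels), verifies this is a bisimulation in $M_i$ with $|\Scal_{\phi_i}| = 2H$, and concludes via the $\Omega(2^H)$ lower bound. Your explicit separation of inner nodes versus leaves and the depth-indexing discussion amount to the same construction spelled out more carefully.
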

\begin{proof}
Following the proof of Proposition~\ref{prop:unstructured_feature_selection}, we consider the family of MDPs that share the same deterministic transition dynamics with a complete tree structure, where each MDP only has one rewarding leaf. Let $N=2^H$ be the number of leaves, and let the MDPs be $\{M_i\}$ where the index indicates the rewarding leaf. We then construct a set of $N$ abstractions where one of them will always be a bisimulation regardless of which MDP we choose from the family. Consider the $i$-th abstraction, $\phi_i$. At each level $h$, $\phi_i$ aggregates the state on the optimal path in its own equivalence class, and aggregates all other states together. It does not aggregate states across levels. It is easy to verify that $\phi_i$ is a bisimulation in MDP $M_i$, and $|\Scal_{\phi_i}| = 2H$. If the hypothetical algorithm existed, it would achieve a sample complexity sublinear in $N$, as $N=2^H$ and all other relevant parameters (e.g., $|\Scal_{\phi_i}|$ and $H$) are at most polynomial in $H$. However, the set of abstractions $\{\phi_i\}$ is uninformative in this specific problem and does not affect the $\Omega(2^H)$ sample complexity, which completes the proof.
\end{proof}

\end{document}